\newtheorem{proposition}{Proposition}
\newtheorem{lemma}{Lemma}
\newtheorem{remark}{Remark} 
\title{\LARGE \bf
 Decoupled Data Based Approach for Learning to Control Nonlinear Dynamical Systems
}
\author{Ran Wang$^1$,  Karthikeya Parunandi$^1$, Dan Yu$^2$, Dileep Kalathil$^3$, Suman Chakravorty$^1$
\thanks{$^{1}$R. Wang, K. Parunandi and S.  Chakravorty are with the Department of Aerospace Engineering,  Texas A\&M University, Texas, USA. 
        {\tt\small \{rwang0417@tamu.edu, s.parunandi, schakrav\}@tamu.edu}}%
\thanks{$^{2}$D.Yu is with  the College of Astronautics,
        Nanjing University of Aeronautics and Astronautics, Nanjing, 210016, China.
        {\tt\small yudan@nuaa.edu.cn}}%
\thanks{$^{3}$D. Kalathil is with the Department of Electrical and Computer Engineering,  Texas A\&M University, Texas, USA. 
        {\tt\small  dileep.kalathil@tamu.edu}}%
}
\begin{document}

\maketitle
\thispagestyle{empty}
\pagestyle{empty}

\begin{abstract}
This paper addresses the problem of learning the optimal control policy for a nonlinear stochastic dynamical system  with continuous state space, continuous action space and unknown dynamics.  This class of problems are typically addressed in stochastic adaptive control and reinforcement learning literature using  model-based and model-free approaches respectively. Both methods rely on solving a dynamic programming problem, either directly or indirectly, for finding the optimal closed loop control policy. The inherent `curse of dimensionality' associated with  dynamic programming method makes these approaches also computationally difficult.    

This paper proposes a novel  decoupled data-based control (D2C) algorithm that addresses this problem using  a decoupled,  `open loop - closed loop', approach.  First, an open-loop deterministic trajectory optimization problem is solved using a black-box simulation model of the dynamical system. Then, a closed loop control is developed around this open loop trajectory by linearization of the dynamics about this nominal trajectory. By virtue of linearization, a linear quadratic regulator based algorithm can be used for this closed loop  control. We show that the performance of D2C algorithm is approximately optimal. Moreover, simulation performance suggests significant reduction in training time compared to other state of the art algorithms.   
\end{abstract}

\section{Introduction}\label{sec1}
Controlling an unknown dynamical system adaptively has a rich history in control literature \cite{kumar2015stochastic} \cite{ioannou2012robust}. These classical literature provides rigorous  analysis about the asymptotic performance and  stability of the closed loop system.  Classical adaptive control literature mainly focuses on non-stochastic systems \cite{aastrom2013adaptive} \cite{sastry2011adaptive}. Stochastic adaptive control literature mostly addresses  tractable models like linear quadratic regulator (LQR)   where Riccati equation based  closed form expressions are available for the optimal control law. Optimal control of an unknown nonlinear dynamical system with continuous state space and continuous action  space is a significantly more challenging problem. Even with a known model, computing an optimal control law requires solving a dynamic programming problem. The `curse of dimensionality' associated with dynamic programming  makes solving such problems computationally intractable, except under special structural assumptions on the underlying system. \emph{Learning to control} problems where the model of the system is unknown also suffer from this computational complexity issues, in addition to the usual identifiability problems in adaptive control.       

Last few years have seen significant progresses in deep neural netwoks based reinforcement learning  approaches for controlling unknown dynamical systems, with applications in many areas like playing games \cite{silver2016mastering}, locomotion \cite{lillicrap2015continuous} and robotic hand manipulation \cite{levine2016end}. A number of new algorithms that show promising performance are  proposed \cite{acktr} \cite{trpo} \cite{ppo} and various improvements and innovations have been continuously developed. However, despite excellent performance on a number of tasks, reinforcement learning (RL) is still considered very data intensive. The training time for such algorithms are typically really large. Moreover,  high variance and reproducibility issues on the performance are also reported \cite{henderson2018deep}. While there have been some attempts to improve the sample efficiency \cite{gu2016q}, a systematic approach is still lacking.

In this work, we propose a novel  decoupled data based control (D2C) algorithm for learning to control an unknown nonlinear dynamical system. Our approach introduces a rigorous decoupling of the open loop (planning) problem from the closed loop (feedback control) problem. This decoupling allows us to come up with a highly sample efficient approach to solve the problem in a completely data based fashion. Our approach proceeds in two steps: (i) first, we optimize the nominal open loop trajectory of the system using a blackbox simulation model, (ii) then we identify the linear system governing perturbations from the nominal trajectory using random input-output perturbation data, and design an LQR controller for this linearized system.  We show that the performance of D2C algorithm is approximately optimal,   in the sense that the decoupled design is near optimal to second order in a suitably defined noise parameter. Moreover, simulation performance suggests significant reduction in training time compared to other state of the art algorithms.

{\bf Related works:}  The solution approaches  to the problem of controlling an unknown dynamical systems can be divided into two broad classes, model-based methods and model-free methods.  

 In the model-based methods, many techniques \cite{dp_num} rely on a discretization of the underlying state and action space, and hence, run into the curse of dimensionality, the fact that the computational complexity grows exponentially with the dimension of the state space of the problem.  The most computationally efficient among these techniques are trajectory-based methods such as  differential dynamic programming (DDP) \cite{ddp} \cite{sddp} which linearizes the dynamics and the cost-to-go function around a given nominal trajectory, and designs a local feedback controller using DP. The iterative linear quadratic Gaussian  (ILQG) \cite{ilqg1}  \cite{ilqg2}, which is closely related to DDP,  considers the first order expansion of the dynamics (in DDP, a second order expansion is considered), and designs the feedback controller using Riccati-like equations, and is shown to be computationally more efficient. In both approaches, the control policy is executed to compute a new nominal trajectory, and the procedure is repeated until convergence. 
 

Model-free methods, more popularly known as  approximate dynamic programming   \cite{powell2007approximate} \cite{bertsekas2012dynamic} or  reinforcement learning (RL) methods \cite{sutton2018reinforcement}, seek to improve the control policy  by repeated interactions with the environment, while observing the system's responses. The repeated interactions, or learning trials, allow these algorithms to compute the solution of the dynamic programming problem (optimal value/Q-value function or optimal policy) without explicitly constructing the model of the unknown dynamical system.  Standard RL algorithms are broadly  divided into value-based methods, like Q-learning, and policy-based methods, like policy gradient algorithms. Recently, function approximation using deep neural networks has significantly improved the performance of reinforcement learning algorithm, leading to a growing class of literature on `deep reinforcement learning'. Despite the success, the amount of samples and training time required still seem prohibitive.


{\bf Our Contributions:}
Rather than directly finding the closed loop control law which requires solving a dynamic programming problem, our approach addresses the original stochastic control problem in a 'decoupled open loop - closed loop' fashion. In this approach: i) we solve an open loop deterministic optimization problem to obtain an optimal nominal trajectory in a model-free fashion, and then  ii) we design a closed loop controller for the resulting linearized time-varying system around the optimal nominal trajectory,  in a model-based fashion. This `divide and conquer' strategy can be shown to be extremely effective. In this context,  our major contributions are: 1) we show a near optimal parametrization of the feedback policy in terms of an open loop control sequence, and a linear feedback control law, 2) we show rigorously that the open loop and closed loop learning can be decoupled, which 3) results in the D2C algorithm that is highly data efficient when compared to  state of the art RL techniques.  This paper is a rigorous extension of our preliminary work \cite{cdc_soc, separation}, in particular, it includes a stronger decoupling result, and an extensive empirical evaluation of the D2C algorithm with state of the art RL implementations on standard benchmark problems.

The rest of the paper is organized as follows. In Section \ref{sec2}, the basic problem formulation is outlined. In Section \ref{sec3}, a decoupling result which solves the MDP in a ``decoupled open loop-closed loop " fashion is briefly summarized. In Section \ref{sec4}, we propose a decoupled data based control algorithm, with discussions of implementation problems. In Section \ref{sec5}, we  test the proposed approach using four typical benchmarking examples with comparisons to a state of the art RL technique. 

\section{PROBLEM FORMULATION}\label{sec2}
Consider the following discrete time nonlinear stochastic dynamical system:
\begin{align} 
\label{original system}
x_{t+1} = h(x_{t}, u_{t},  w_{t}),
\end{align}
where $x_t \in \mathbb{R}^{n_x}$,   $u_t \in \mathbb{R}^{n_u}$ are the state  measurement and control vector at time $k$, respectively. The process noise $w_{t}$ is assumed as zero-mean, uncorrelated Gaussian white noise, with covariance $W$. 

The \emph{optimal stochastic control} problem is to find the the control policy $\pi^{o} = \{\pi^{o}_1, \pi^{o}_2, \cdots, \pi^{o}_{T-1} \}$ such that the expected cumulative cost is minimized, i.e., 
\begin{align}
\label{cost_sto_orig}
\pi^{o} &= \arg \min_{\pi}  ~ \tilde{J}^{\pi}(x),~~\text{where,}  \nonumber \\
\tilde{J}^{\pi}(x) &= \mathbb{E}_{\pi} \left[ \sum_{t = 1}^{T-1} c(x_{t}, u_{t}) + c_{T}(x_{T}) | x_{1} = x \right], 
\end{align}
$u_{t} = \pi_{t}(x_{t})$, $c(\cdot, \cdot)$ is the instantaneous cost function, and $c_{T}(\cdot)$ is the terminal cost function.  In the following, we assume that the initial state $x_{1}$ is fixed, and denote $\tilde{J}^{\pi}(x)$ simply as $\tilde{J}^{\pi}$. 

If the function  $h(\cdot, \cdot, \cdot)$ is known exactly, then the optimal control law $\pi^{o}$ can be computed using dynamic programming method. However, as noted before, this can be often computationally intractable. Moreover, when $h$ is unknown, designing an optimal closed loop control law is  a much more  challenging problem. In the following, we propose a data based decoupled approach for solving \eqref{cost_sto_orig} when $h$ is unknown.

%
%
%
%

\section{A NEAR OPTIMAL DECOUPLING PRINCIPLE}\label{sec3}
We first outline a near-optimal decoupling principle in stochastic optimal control that paves the way for the D2C algorithm  described in Section \ref{sec4}.

We make the following assumptions for the simplicity of illustration.  We assume that the dynamics given in \eqref{original system} can be written in the  form 
\begin{align} 
\label{eq.0.1}
x_{t+1} = f(x_t) + B_{t} (u_t + \epsilon w_t), 
\end{align}
where $\epsilon < 1$ is a small parameter. We also assume that the instantaneous cost $c(\cdot, \cdot)$ has the following simple form,  
\begin{align}
\label{eq:cost1}
c(x,u) = l(x) + \frac{1}{2} u'Ru.
\end{align}
We emphasis that these assumptions,  quadratic  control cost and affine in control dynamics, are purely for the simplicity of treatment. These assumptions can be  omitted at the cost of increased notational   complexity.    


\subsection{Linearization w.r.t. Nominal Trajectory}

Consider a noiseless  version of the system dynamics given by \eqref{eq.0.1}. We denote the ``nominal'' state trajectory as $\bar{x}_{t}$ and the ``nominal'' control as $\bar{u}_{t}$ where $u_{t} = \pi_{t}(x_{t})$, where $\pi = (\pi_{t})^{T-1}_{t=1}$ is a given control policy. The resulting dynamics  without noise is given by $\bar{x}_{t+1} = f(\bar{x}_t) + B_{t} \bar{u}_t$.   


Assuming that $f(\cdot)$ and $\pi_{t}(\cdot)$ are sufficiently smooth, we can linearize the dynamics about the nominal trajectory. Denoting  $\delta x_t = x_t - \bar{x}_t, \delta u_t = u_t - \bar{u}_t$, we can express, 
\begin{align}
\delta x_{t+1} &= A_t \delta x_t + B_t \delta u_t + S_t(\delta x_t) + \epsilon w_t, \label{eq.2}\\
\delta u_{t} &=  K_t \delta x_t + \tilde{S}_t(\delta x_t), \label{eq.3}
\end{align}
where $A_t = \frac{\partial f}{\partial x}|_{\bar{x}_t}$, $K_{t} = \frac{\partial \pi_{t}}{\partial x}|_{\bar{x}_t}$, and  $S_t(\cdot), \tilde{S}_t(\cdot)$ are second and higher order terms in the respective expansions. Similarly, we can linearize the instantaneous cost $c(x_{t}, u_{t})$ about the nominal values $(\bar{x}_{t}, \bar{u}_{t})$ as,
\begin{align}
c(x_t,u_t) &= {l}(\bar{x}_{t}) + L_t \delta x_t + H_t(\delta x_t) + \nonumber\\ 
&\hspace{1cm} \frac{1}{2}\bar{u}_t'R\bar{u}_t +  \delta u_t'R\bar{u}_t + \delta u_t'R\delta u_t,\label{eq.4}\\
c_{T}(x_{T}) &= {c}_{T}(\bar{x}_{T}) + C_T \delta x_T + H_T(\delta x_T),\label{eq.5}
\end{align}
where $L_t = \frac{\partial l}{\partial x}|_{\bar{x}_t}$, $C_T  = \frac{\partial c_T}{\partial x}|_{\bar{x}_t}$, and $H_t(\cdot)$ and $H_T(\cdot)$ are second and higher order terms in the respective expansions.

Using \eqref{eq.2} and \eqref{eq.3}, we can write the closed loop dynamics  of the trajectory $(\delta x_{t})^{T}_{t=1}$ as, 
\begin{align}
\label{eq.6}
\delta x_{t+1} = \underbrace{(A_t+B_tK_t)}_{\bar{A}_t} \delta x_t + \underbrace{\{B_t\tilde{S}_t(\delta x_t) + S_t(\delta x_t)\}}_{\bar{S}_t(\delta x_t)} + \epsilon w_t, 
\end{align} 
where $\bar{A}_t$ represents the linear part of the closed loop systems and the term $\bar{S}_t(.)$ represents the second and higher order terms in the closed loop system. Similarly, the closed loop incremental cost given in  \eqref{eq.4} can be expressed as
\begin{align}
 \label{eq.7}
c(x_t,u_t) = \underbrace{\{{l}(\bar{x}_{t}) + \frac{1}{2}\bar{u}_t'R\bar{u}_t\}}_{\bar{c}_t} + \underbrace{[L_t + \bar{u}_t'RK_t]}_{\bar{C}_t} \delta x_t \nonumber\\
+ \underbrace{(K_t\delta x_t+\tilde{S}_t(\delta x_t))'R(K_t\delta x_t + \tilde{S}_t(\delta x_t))}_{\bar{H}_t(\delta x_t)}.
\end{align}

Therefore, the cumulative cost of any given closed loop trajectory $(x_{t}, u_{t})^{T}_{t=1}$  can be expressed as,
\begin{align}
\label{eq.9a}
J^{\pi} &= \sum^{T-1}_{t=1}c(x_{t}, u_{t} = \pi_{t}(x_{t})) + c_{T}(x_{T}) \nonumber \\
&=\sum_{t=1}^T \bar{c}_t + \sum_{t=1}^T \bar{C}_t \delta x_t + \sum_{t=1}^T \bar{H}_t(\delta x_t),
\end{align}
where $\bar{c}_{T} = c_{T}(\bar{x}_{T}),  \bar{C}_{T} = C_{T}$.

We first show the following result. 
\begin{lemma} 
\label{L1}
The state perturbation equation 
\[\delta x_{t+1} = \bar{A}_{t}  \delta x_{t} + \bar{S}_t(\delta x_t) + \epsilon w_t \]
given in  \eqref{eq.6} can be equivalently characterized  as
\begin{align}
\label{eq:mod-pert-1}
\delta x_{t}  = \delta x_t^l + \bar{\bar{S_t}}, ~ \delta x_{t+1}^l = \bar{A}_t \delta x_t^l + \epsilon w_t
\end{align} 
where $\bar{\bar{S}}_t$ is an $O(\epsilon^2)$ function that depends on the entire noise history $\{w_0,w_1,\cdots w_t\}$ and $\delta x_t^l$ evolves according to the  linear closed loop system.
\end{lemma}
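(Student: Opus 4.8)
The plan is to establish the equivalence by induction on the time index $t$, where I take $\delta x_t^l$ to be the trajectory of the linear closed loop system $\delta x_{t+1}^l = \bar A_t \delta x_t^l + \epsilon w_t$ initialized at $\delta x_1^l = \delta x_1 = 0$ (recall $x_1$ is fixed, so $\bar x_1 = x_1$ and the perturbation vanishes at $t=1$), and I define $\bar{\bar S}_t := \delta x_t - \delta x_t^l$. The base case is then trivial: $\bar{\bar S}_1 = 0$. The key preliminary fact is that $\delta x_t^l$ is $O(\epsilon)$ uniformly on the finite horizon; unrolling the linear recursion gives $\delta x_t^l = \epsilon \sum_{s=1}^{t-1}\big(\prod_{r=s+1}^{t-1}\bar A_r\big) w_s$, a finite sum whose terms are each proportional to $\epsilon$, so $\|\delta x_t^l\|$ is bounded by $\epsilon$ times a constant depending only on $T$, the fixed matrices $\bar A_r$, and the noise realization.

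For the inductive step, suppose $\delta x_t = \delta x_t^l + \bar{\bar S}_t$ with $\bar{\bar S}_t = O(\epsilon^2)$ and $\bar{\bar S}_t$ a function of the noise realized before time $t$ only. Substituting into \eqref{eq.6},
\begin{align*}
\delta x_{t+1} &= \bar A_t(\delta x_t^l + \bar{\bar S}_t) + \bar S_t(\delta x_t^l + \bar{\bar S}_t) + \epsilon w_t \\
&= \underbrace{\big(\bar A_t \delta x_t^l + \epsilon w_t\big)}_{\delta x_{t+1}^l} + \underbrace{\bar A_t \bar{\bar S}_t + \bar S_t(\delta x_t^l + \bar{\bar S}_t)}_{\bar{\bar S}_{t+1}}.
\end{align*}
It remains to verify $\bar{\bar S}_{t+1} = O(\epsilon^2)$: the term $\bar A_t\bar{\bar S}_t$ is $O(\epsilon^2)$ by hypothesis, while $\delta x_t = \delta x_t^l + \bar{\bar S}_t = O(\epsilon)$ and, since $\bar S_t(\cdot)$ by construction collects only the second and higher order terms of the smooth expansion of the closed loop map about $\bar x_t$, one has $\|\bar S_t(\delta x_t)\| \le c_t\|\delta x_t\|^2 = O(\epsilon^2)$, with $c_t$ bounding the relevant second derivatives on a neighborhood of $\bar x_t$. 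Moreover $\bar{\bar S}_{t+1}$ is built only from $\delta x_t^l$, $\bar{\bar S}_t$ and $\bar A_t$, so it depends only on $\{w_1,\dots,w_{t-1}\}$, which is contained in $\{w_0,\dots,w_t\}$. This closes the induction, and since $\delta x_t^l$ is exactly the linear closed loop trajectory, \eqref{eq:mod-pert-1} holds for every $t\le T$.

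The main obstacle I anticipate is making the $O(\epsilon^2)$ claim precise and uniform in $t$. The remainder bound $c_t\|\delta x_t\|^2$ is only legitimate while $\delta x_t$ stays in the neighborhood of $\bar x_t$ on which the Taylor remainder is controlled, and since $w_t$ is Gaussian and hence unbounded, the estimates are naturally interpreted in expectation (or with high probability) rather than pathwise. The clean way to handle this is to propagate moment bounds on $\delta x_t^l$ through the finitely many steps $t=1,\dots,T$, absorbing the constants that accumulate along the horizon, and to invoke the smoothness of $f$ and $\pi_t$ assumed earlier to control the remainders $S_t$, $\tilde S_t$ (hence $\bar S_t$). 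Everything beyond this is bookkeeping: at each step one peels off the linear part $\bar A_t\delta x_t^l+\epsilon w_t$ of the recursion and folds the leftover quadratic-and-higher pieces into $\bar{\bar S}_{t+1}$.
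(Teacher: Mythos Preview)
Your proposal is correct and follows essentially the same inductive argument as the paper: peel off the linear part $\bar A_t\delta x_t^l+\epsilon w_t$ at each step and absorb the quadratic-and-higher remainder into $\bar{\bar S}_{t+1}$. If anything you are more careful than the paper, which does not explicitly justify that $\delta x_t^l$ is $O(\epsilon)$ or discuss the uniformity/pathwise-versus-moment issue you flag at the end.
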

Proof is  omitted due to page limits. Detailed proof is given in \cite{d2cTR}.

Using  \eqref{eq:mod-pert-1} in  \eqref{eq.9a}, we can obtain the cumulative cost of any given closed loop trajectory as,
\begin{align}
J^{\pi} = \underbrace{\sum_{t=1}^T \bar{c}_t }_{\bar{J}^{\pi}} + \underbrace{\sum_{t=1}^T \bar{C}_t \delta x_t^l}_{\delta J_1^{\pi}} + 
\underbrace{\sum_{t=1}^T \bar{H}_t(\delta x_t) + \bar{C}_t \bar{\bar{S}}_t}_{\delta J_2^{\pi}}. \label{eq.9b}
\end{align}

Now, we show the following important result.

\begin{proposition}  
\label{prop1} 
 \begin{align*}
\tilde{J}^{\pi} &= \mathbb{E}[J^{\pi} ] =  \bar{J}^{\pi} + O(\epsilon^2), \\
 \text{Var}(J^{\pi}) &=  \underbrace{\text{Var}(\delta J_{1}^{\pi})}_{O(\epsilon^{2})} +  O(\epsilon^4). 
 \end{align*}
 \end{proposition}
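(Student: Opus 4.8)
The plan is to read everything off the decomposition \eqref{eq.9b}, $J^{\pi} = \bar J^{\pi} + \delta J_1^{\pi} + \delta J_2^{\pi}$, keeping track of the order in $\epsilon$ \emph{and} of the polynomial degree in the noise vector $w := (w_1,\dots,w_{T-1})$ of each random term, and using that $w$ is a zero-mean Gaussian. First I would invoke Lemma~\ref{L1} to write $\delta x_t = \delta x_t^{l} + \bar{\bar S}_t$, where $\bar{\bar S}_t = O(\epsilon^2)$ and $\delta x_t^{l}$ solves the linear recursion in \eqref{eq:mod-pert-1} with $\delta x_1^{l}=0$; solving that recursion over the finite horizon shows that $\delta x_t^{l}$ is \emph{exactly} a deterministic linear combination of $w_1,\dots,w_{t-1}$ scaled by $\epsilon$, so $\mathbb{E}[\delta x_t^{l}]=0$ and $\mathbb{E}[\|\delta x_t^{l}\|^2]=O(\epsilon^2)$. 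Since $\bar H_t(\cdot)$ collects quadratic-and-higher terms and $\bar c_t,\bar C_t$ depend only on the nominal trajectory (hence are deterministic and $O(1)$), I obtain $\delta J_1^{\pi}=\epsilon\, a_1(w)$ with $a_1$ linear in $w$ and zero-mean, and $\delta J_2^{\pi}=\epsilon^2 b_2(w)+O(\epsilon^3)$, where $b_2(w)$ is a degree-$2$ polynomial in $w$ collecting the leading parts of $\bar H_t(\delta x_t^{l}) = (\delta x_t^{l})'K_t'RK_t\delta x_t^{l}$ and of $\bar C_t \bar{\bar S}_t$ (the latter being $\epsilon^2$ times a quadratic in $w$ because $\bar S_t(\cdot)$ is second-order and is evaluated at $\delta x_t^{l}=O(\epsilon)$).

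For the mean, taking expectations in \eqref{eq.9b} gives $\mathbb{E}[J^{\pi}] = \bar J^{\pi} + \sum_t \bar C_t \mathbb{E}[\delta x_t^{l}] + \mathbb{E}[\delta J_2^{\pi}] = \bar J^{\pi} + 0 + O(\epsilon^2)$, which is the first claim (the middle term vanishes by zero-mean-ness of $\delta x_t^{l}$, the last is $O(\epsilon^2)$ termwise).

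For the variance, since $\bar J^{\pi}$ is deterministic,
\[ \text{Var}(J^{\pi}) = \text{Var}(\delta J_1^{\pi}) + 2\,\text{Cov}(\delta J_1^{\pi},\delta J_2^{\pi}) + \text{Var}(\delta J_2^{\pi}). \]
From $\delta J_1^{\pi}=\epsilon a_1(w)$ (zero-mean) we get $\text{Var}(\delta J_1^{\pi}) = \epsilon^2\,\text{Var}(a_1(w)) = O(\epsilon^2)$, and from $\delta J_2^{\pi}=O(\epsilon^2)$ we get $\text{Var}(\delta J_2^{\pi})=O(\epsilon^4)$. The delicate point is the cross term: a bare Cauchy--Schwarz bound only yields $O(\epsilon^3)$. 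To improve it to $O(\epsilon^4)$ I would use the expansions above: since $\mathbb{E}[a_1(w)]=0$,
\[ \text{Cov}(\delta J_1^{\pi},\delta J_2^{\pi}) = \mathbb{E}\!\left[\epsilon a_1(w)\,\big(\epsilon^2 b_2(w)+O(\epsilon^3)\big)\right] = \epsilon^3\,\mathbb{E}[a_1(w)b_2(w)] + O(\epsilon^4). \]
Now $a_1(w)b_2(w)$ is a degree-$3$ polynomial in the jointly Gaussian zero-mean vector $w$, so $\mathbb{E}[a_1(w)b_2(w)]$ is a sum of third-order moments, each of which vanishes by symmetry of the Gaussian. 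Hence $\text{Cov}(\delta J_1^{\pi},\delta J_2^{\pi})=O(\epsilon^4)$, and combining the three pieces gives $\text{Var}(J^{\pi}) = \text{Var}(\delta J_1^{\pi}) + O(\epsilon^4)$ with $\text{Var}(\delta J_1^{\pi})=O(\epsilon^2)$.

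The only genuinely nontrivial step is this cancellation of the $O(\epsilon^3)$ contribution to $\text{Cov}(\delta J_1^{\pi},\delta J_2^{\pi})$; it is precisely where the symmetry (vanishing odd moments) of the Gaussian noise is used, and a proof that stopped at the Cauchy--Schwarz estimate would miss the claimed $O(\epsilon^4)$ rate. A secondary, purely technical issue is making the $O(\epsilon^k)$ statements rigorous: the Taylor remainders $S_t,\tilde S_t,\bar{\bar S}_t,\bar H_t$ are only controlled locally, so one needs finiteness of the horizon $T$, smoothness of $f$ and $\pi_t$, and some integrability/light-tail control on $w_t$ (e.g. a truncation argument using Gaussian tails) to ensure all the moments appearing above are finite and the remainders are uniformly $O(\epsilon^k)$ in $L^2$.
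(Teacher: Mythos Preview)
Your proposal is correct and follows essentially the same route as the paper: the mean claim comes from $\mathbb{E}[\delta x_t^l]=0$ and $\delta J_2^{\pi}=O(\epsilon^2)$, and the variance claim from expanding $\text{Var}(\delta J_1^{\pi}+\delta J_2^{\pi})$ and showing the cross term is $O(\epsilon^4)$. Your handling of the cross term---observing that $a_1(w)b_2(w)$ is a degree-$3$ polynomial in a zero-mean Gaussian and hence has zero expectation---is a cleaner packaging of what the paper does in its Lemma~2, which unwinds the same fact via an increment decomposition that ultimately reduces to $\mathbb{E}[w_{s_1}w_{s_2}w_{s_3}]=0$.
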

 
\begin{proof}
From \eqref{eq.9b}, we get,
\begin{align} 
\tilde{J}^{\pi} = \mathbb{E}[J^{\pi}] =  \mathbb{E}[ \bar{J}^{\pi} + \delta J_1^{\pi} + \delta J_2^{\pi}], \nonumber\\
= \bar{J}^{\pi} + \mathbb{E}[\delta J_2^{\pi}]  = \bar{J}^{\pi} + O(\epsilon^2), \label{eq.10}
\end{align}
The first equality in the last line of the equations before follows from the fact that $\mathbb{E}[\delta x_t^l] = 0$, since its the linear part of the state perturbation driven by white noise and by definition $\delta x_1^l = 0$.The second equality follows form the fact that $\delta J_2^{\pi}$ is an $O(\epsilon^2)$ function.  Now,
\begin{align}
\text{Var}(J^{\pi}) = \mathbb{E}[ J^{\pi} - \tilde{J}^{\pi}]^2 \nonumber\\
= \mathbb{E}[ \bar{J}_0^{\pi} + \delta J_1^{\pi} + \delta J_2^{\pi} - \bar{J}_0^{\pi} - \delta \tilde{J}_2^{\pi}]^2 \nonumber\\
= \text{Var}(\delta J_1^{\pi}) + \text{Var}(\delta J_2^{\pi})  + 2 \mathbb{E}[\delta J_1^{\pi} \delta J_2^{\pi}].
\end{align}
Since $\delta J_2^{\pi}$ is $O(\epsilon^2)$, $\text{Var}(\delta J_2^{\pi})$ is an $O(\epsilon^4)$ function. It can be shown that $\mathbb{E}[\delta J_1^{\pi} \delta J_2^{\pi}]$ is $O(\epsilon^4)$ as well (proof is given \cite{d2cTR}). Finally $\text{Var}(\delta J_1^{\pi})$ is an $O(\epsilon^2)$ function because $\delta x^l$ is an $O(\epsilon)$ function. Combining these, we will get the desired result. 
\end{proof} 

The following observations can now be made from Proposition \ref{prop1}. 
 
\begin{remark}[Expected cost-to-go]Recall that $u_{t} = \pi_t(x_t)$ $= \bar{u}_t + K_t\delta x_t + \tilde{S}_t(\delta x_t)$. However, note that due to Proposition \ref{prop1}, the expected cost-to-go, $\tilde{J}^{\pi}$, is determined almost solely (within $O(\epsilon^2)$)  by the nominal control action sequence $\bar{u}_t$. In other words, the linear and higher order feedback terms have only $O(\epsilon^2)$ effect on the expected cost-to-go function.
\end{remark}

\begin{remark}[Variance of cost-to-go] Given the nominal control action $\bar{u}_t$, the variance of the cost-to-go, which is $O(\epsilon^2)$, is determined overwhelmingly (within $O(\epsilon^4)$) by the linear feedback term $K_t \delta x_t$, i.e., by the variance of the linear perturbation of the cost-to-go, $\delta J_1^{\pi}$, under the linear closed loop system $\delta x_{t+1}^l = (A_t+B_tK_t)\delta x_t^l + \epsilon w_t$.
\end{remark}

\subsection{Decoupled Approach for Closed Loop Control} 

Proposition \ref{prop1} and the remarks above suggest that an open loop control  super imposed with a closed loop control for the perturbed linear system may be approximately optimal. We delineate  this idea below. 

\textit{Open Loop Design.} First, we design an optimal (open loop) control sequence $\bar{u}^{*}_t$ for the noiseless system. More precisely, 
\begin{align}
\label{OL}
(\bar{u}^{*}_t)^{T-1}_{t=1} &= \arg \min_{(\tilde{u}_t)^{T-1}_{t=1}} \sum_{t=1}^{T-1} c(\bar{x}_t, \tilde{u}_t) + c_T(\bar{x}_T), \\
\bar{x}_{t+1} &= f(\bar{x}_t) + B_{t} \tilde{u}_t.  \nonumber
\end{align}
We will discuss the details of this open loop design in Section \ref{sec4}.

\textit{Closed Loop Design.} We find the optimal feedback gain $K^{*}_t$ such that the variance of the linear closed loop system around the nominal path, $(\bar{x}_t, \bar{u}^{*}_t)$, from the open loop design above, is minimized.  
\begin{align}
(K^{*}_t)^{T-1}_{t=1} &=   \arg \min_{(K_t)^{T-1}_{t=1}} ~ \text{Var}(\delta J_1^{\pi}), \nonumber\\
 &\delta J_1^{\pi} = \sum_{t=1}^T \bar{C}_t  x_t^l,\nonumber\\
&\delta x_{t+1}^l = (A_t + B_tK_t) \delta x_t^l + \epsilon w_t. \label{CL}
\end{align} 
We now characterize the approximate closed loop policy below.

\begin{proposition}
Construct a closed loop policy 
\begin{align}
\pi_t^*(x_t) = \bar{u}_t^* + K_t^*\delta x_t,
\end{align} 
where $\bar{u}_t^*$ is the solution of the open loop problem \eqref{OL}, and $K_t^*$ is the solution of the closed loop problem \eqref{CL}. Let $\pi^{o}$ be the optimal closed loop policy. Then, 
\begin{align}
 |\tilde{J}^{\pi*}  - \tilde{J}^{\pi^o}| = O(\epsilon^2).\nonumber
 \end{align}
 Furthermore, among all policies with nominal control action $\bar{u}_t^*$, the variance of the cost-to-go under policy $\pi_t^*$, is within $O(\epsilon^4)$ of the variance of the policy with the minimum variance.
 \end{proposition}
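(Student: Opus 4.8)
The plan is to derive both claims from Proposition~\ref{prop1} combined with the defining optimality of the open loop problem \eqref{OL} and the closed loop problem \eqref{CL}. The elementary observation underpinning everything is that along the \emph{noiseless} dynamics the perturbation $\delta x_t$ is identically zero, so $\pi_t^*$ collapses to the open loop sequence $\bar u_t^*$; consequently the nominal trajectory generated by $\pi^*$ is exactly $(\bar x_t^*, \bar u_t^*)$, its linear feedback gain is $K_t^*$, and its nominal cost equals $\bar J^{\pi^*} = \sum_{t=1}^{T-1} c(\bar x_t^*, \bar u_t^*) + c_T(\bar x_T^*)$, i.e. the optimal value of \eqref{OL}. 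Likewise $\pi^*$ lies in the family of policies with nominal control $\bar u_t^*$, and for that family $\text{Var}(\delta J_1^{\pi^*})$ is precisely the optimum attained in \eqref{CL}.

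For the near-optimality of the expected cost, I would roll out the (assumed smooth) optimal policy $\pi^o$ on the noiseless system to obtain its own nominal control sequence $(\bar u_t^o)_{t=1}^{T-1}$ and nominal cost $\bar J^{\pi^o}$. Since $(\bar u_t^o)$ is a feasible control sequence in \eqref{OL}, optimality of $\bar u_t^*$ gives $\bar J^{\pi^*} \le \bar J^{\pi^o}$. Applying the expected-cost statement of Proposition~\ref{prop1} to both policies,
\begin{align*}
\tilde J^{\pi^*} &= \bar J^{\pi^*} + O(\epsilon^2) \le \bar J^{\pi^o} + O(\epsilon^2) \\
&= \tilde J^{\pi^o} + O(\epsilon^2),
\end{align*}
while optimality of $\pi^o$ for \eqref{cost_sto_orig} gives $\tilde J^{\pi^o} \le \tilde J^{\pi^*}$. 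Sandwiching these yields $0 \le \tilde J^{\pi^*} - \tilde J^{\pi^o} = O(\epsilon^2)$, which is the first claim.

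For the variance claim, I would restrict to the family of smooth policies whose nominal control action is $\bar u_t^*$. For every policy in this family the nominal trajectory is fixed, hence so are $A_t, B_t$ and the coefficients $\bar C_t$, and the policy enters $\delta J_1^{\pi} = \sum_t \bar C_t \delta x_t^l$ only through its linear feedback gain $K_t = \partial \pi_t/\partial x|_{\bar x_t^*}$ via the linear recursion $\delta x_{t+1}^l = (A_t + B_t K_t)\delta x_t^l + \epsilon w_t$. By the variance statement of Proposition~\ref{prop1}, $\text{Var}(J^{\pi}) = \text{Var}(\delta J_1^{\pi}) + O(\epsilon^4)$ for any such $\pi$. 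Letting $\pi^{\dagger}$ be a minimum-variance policy in this family and using that $K_t^*$ minimizes $\text{Var}(\delta J_1^{\pi})$ over all gain sequences, I would chain
\begin{align*}
\text{Var}(J^{\pi^{\dagger}}) &\le \text{Var}(J^{\pi^*}) = \text{Var}(\delta J_1^{\pi^*}) + O(\epsilon^4) \\
&\le \text{Var}(\delta J_1^{\pi^{\dagger}}) + O(\epsilon^4) = \text{Var}(J^{\pi^{\dagger}}) + O(\epsilon^4),
\end{align*}
so that $|\text{Var}(J^{\pi^*}) - \text{Var}(J^{\pi^{\dagger}})| = O(\epsilon^4)$.

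The main obstacle is not the inequality chains but making the $O(\epsilon^2)$ and $O(\epsilon^4)$ remainders \emph{uniform} over the policy classes that appear — most delicately over $\pi^o$ itself, which must admit a linearization about its nominal trajectory with derivative bounds independent of $\epsilon$ for Lemma~\ref{L1} and Proposition~\ref{prop1} to apply with an $\epsilon$-independent constant; without such regularity the bound on $\tilde J^{\pi^o}$ fails. A secondary point is to confirm that optimizing the gain sequence in \eqref{CL} really attains the infimum of $\text{Var}(\delta J_1^{\pi})$ over the full nonlinear family with nominal $\bar u_t^*$ (i.e. the higher-order feedback terms $\tilde S_t$ cannot help beyond $O(\epsilon^4)$), which again reduces to the remainder-uniformity in Proposition~\ref{prop1}.
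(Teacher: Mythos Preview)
Your argument is correct and follows essentially the same route as the paper: telescope $\tilde J^{\pi^*}-\tilde J^{\pi^o}$ through the nominal costs, use $\bar J^{\pi^*}\le \bar J^{\pi^o}$ from the open-loop optimality, apply Proposition~\ref{prop1} to both policies, and close with $\tilde J^{\pi^o}\le \tilde J^{\pi^*}$; the paper dispatches the variance claim with ``a similar argument,'' which you have written out explicitly. Your caveats about uniformity of the $O(\epsilon^2)$/$O(\epsilon^4)$ constants and the smoothness required of $\pi^o$ are well taken and are implicit assumptions the paper also relies on without comment.
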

\begin{proof}
We have 
\begin{align*}
\tilde{J}^{\pi^*} - \tilde{J}^{\pi^o}  &= \tilde{J}^{\pi^*} -  \bar{J}^{\pi^*} + \bar{J}^{\pi^*} -  \tilde{J}^{\pi^o}  \\
&\leq \tilde{J}^{\pi^*} -  \bar{J}^{\pi^*} + \bar{J}^{\pi^{o}} -  \tilde{J}^{\pi^o}
\end{align*}
The  inequality above is due the fact that $\bar{J}^{\pi^*} \leq \bar{J}^{\pi^{o}}$, by definition of $\pi^{*}$. Now, using Proposition \ref{prop1}, we have that $|\tilde{J}^{\pi^*} - \bar{J}^{\pi^*}| = O(\epsilon^2)$, and $|\tilde{J}^{\pi^o} -  \bar{J}^{\pi^o}| = O(\epsilon^2)$. Also, by definition, we have $\tilde{J}^{\pi^o} \leq   \tilde{J}^{\pi^*}$. Then, from the above inequality, we get 
\begin{align*}
| \tilde{J}^{\pi^*} - \tilde{J}^{\pi^o} | \leq |\tilde{J}^{\pi^*} -  \bar{J}^{\pi^*} | + | \bar{J}^{\pi^{o}} -  \tilde{J}^{\pi^o} | = O(\epsilon^{2})
\end{align*}
A similar argument holds for the   variance as well. 
\end{proof}

Unfortunately, there is no standard solution to the closed loop problem \eqref{CL} due to the non additive nature of the cost function $\text{Var}(\delta J_1^{\pi})$. Therefore,  we solve a standard LQR problem as a surrogate, and the effect is again one of reducing the variance of the cost-to-go by reducing the variance of the closed loop trajectories.

\textit{Approximate Closed Loop Problem.} We solve the following LQR problem for suitably defined cost function weighting factors $Q_t$, $R_t$:
\begin{align}
\label{eq:acl1}
\min_{(\delta u_t)^{T}_{t=1}} ~&\mathbb{E} [\sum_{t=1}^{T-1} \delta x_t'Q_t\delta x_t + \delta u_t'R_t \delta u_t + \delta x_T'Q_T\delta x_t], \nonumber\\
&\delta x_{t+1} = A_t \delta x_t + B_t\delta u_t + \epsilon w_t.
\end{align}
The solution to the above problem furnishes us a feedback gan $\hat{K}_t^*$ which we can use in the place of the true variance minimizing gain $K_t^*$.  

\begin{remark}
Proposition \ref{prop1} states that the expected cost-to-go of the problem is dominated by the nominal cost-to-go. Therefore, even an open loop policy consisting of simply the nominal control action is within $O(\epsilon^2)$ of the optimal expected cost-to-go. However, the plan with the optimal feedback gain $K_t^*$ is strictly better than the open loop plan in that it has a lower variance in terms of the cost to go. Furthermore, solving the approximate closed loop problem using the surrogate LQR problem, we can expect a lower variance of the cost-to-go function as well.
\end{remark}

%


\section{Decoupled Data Based Control (D2C) Algorithm}\label{sec4}

In this section, we propose a novel decoupled data based control (D2C) algorithm formalizing  the ideas proposed in Section \ref{sec3}. First, a noiseless open-loop optimization problem is solved to find a nominal optimal trajectory. Then a linearized closed-loop controller is designed around this nominal trajectory, such that, with existence of stochastic perturbations, the state stays close to the optimal open-loop trajectory. The three-step framework to solve the stochastic feedback control problem may be summarized as follows.
\begin{itemize}
\item Solve the open loop optimization problem using gradient decent with a black box simulation model of the dynamics.
\item Linearize the system around the nominal open loop optimal trajectory, and identify the linearized time-varying system from input-output experiment data using a suitable system identification algorithm. 
\item Design an LQR controller which results in an optimal linear control policy around the nominal trajectory. 
\end{itemize}
In the following section, we discuss each of the above steps.

\subsection{Open Loop Trajectory Optimization}
\label{sec_open}
A first order gradient descent based algorithm is proposed here for solving the open loop optimization problem given in \eqref{OL}, where the underlying dynamic model is used as a blackbox, and the necessary gradients are found from a sequence of input perturbation experiment data using standard least square.

Denote  the initial guess of the control sequence as $U^{(0)} = \{\bar{u}_t^{(0)} \}_{t = 1}^{T}$, and  the corresponding  states $\mathcal{X}^{(0)} = \{\bar{x}_t^{(0)} \}_{t=1}^T$. The control policy is updated iteratively via
\begin{align}\label{ctrl_upd}
U^{(n + 1)} = U^{(n)} - \alpha \nabla_U \bar{J}|_{\mathcal{X}^{(n)}, U^{(n)}},
\end{align}
where $U^{(n)} = \{\bar{u}_t^{(n)} \}_{t = 1}^{T}$ denotes the control sequence in the $n^{th}$ iteration, $\mathcal{X}^{(n)} = \{\bar{x}_t^{(n)}\}_{t=1}^{T}$ denotes the corresponding states, and $\alpha$ is the step size parameter. As $\bar{J}|_{\mathcal{X}^{(n)}, U^{(n)}}$ is the expected cumulative cost under control sequence $U^{(n)}$ and corresponding states $\mathcal{X}^{(n)}$, the gradient vector is defined as
\begin{align}\label{gradv}
\nabla_U \bar{J}|_{\mathcal{X}^{(n)}, U^{(n)}} = \begin{pmatrix} \frac{\partial \bar{J}}{\partial u_1} & \frac{\partial \bar{J}}{\partial u_1} & \cdots & \frac{\partial \bar{J}}{\partial u_{T}} \end{pmatrix}|_{\mathcal{X}^{(n)}, U^{(n)}},
\end{align}
which is the gradient of the expected cumulative cost w.r.t the control sequence after $n$ iterations. The following paragraph elaborates on how to estimate the above gradient. 

Let us define a rollout to be an episode in the simulation that starts from the initial settings to the end of the horizon with a control sequence. For each iteration, multiple rollouts are conducted sequentially with both the expected cumulative cost and the gradient vector updated iteratively after each rollout. During one iteration for the control sequence, the expected cumulative cost is calculated as
\begin{align}\label{cost_ite}
\bar{J}|_{\mathcal{X}^{(n)}, U^{(n)}}^{j+1} = 
(1-\frac{1}{j})\bar{J}|_{\mathcal{X}^{(n)}, U^{(n)}}^{j} +  \frac{1}{j}(J|_{\mathcal{X}^{j,(n)}, U^{j,(n)}}),
\end{align}
where $j$ denotes the $j^{th}$ rollout within the current iteration process of control sequence. $\bar{J}|_{\mathcal{X}^{(n)}, U^{(n)}}^{j}$ is the expected cumulative cost after $j$ rollouts while $J|_{\mathcal{X}^{j,(n)}, U^{j,(n)}}$ denotes the cost of the $j^{th}$ rollout under control sequence $U^{j,(n)}$ and corresponding states $\mathcal{X}^{j,(n)}$. Note that $U^{j,(n)} = \{\bar{u}_t^{(n)} + \delta u^{j,(n)}_t\}_{t = 1}^{T}$ where $\{\delta u^{j,(n)}_t\}^T_{t=1}$ is the zero-mean, i.i.d Gaussian noise added as perturbation to the control sequence $U^{(n)}$.

Then the gradient vector is calculated in a similar sequential manner as
\begin{align}\label{grad_ite}
\nabla_U\bar{J}|_{\mathcal{X}^{(n)}, U^{(n)}}^{j+1} = 
(1-\frac{1}{j})\nabla_U\bar{J}|_{\mathcal{X}^{(n)}, U^{(n)}}^{j} + \nonumber\\ \frac{1}{j\sigma_{\delta u}}(J|_{\mathcal{X}^{j,(n)}, U^{j,(n)}}-\bar{J}|_{\mathcal{X}^{(n)},U^{(n)}}^{j+1})(U^{j,(n)}-U^{(n)}),
\end{align}
where $\sigma_{\delta u}$ is the variance of the control perturbation and $\nabla_U\bar{J}|_{\mathcal{X}^{(n)}, U^{(n)}}^{j+1}$ denotes the gradient vector after $j$ rollouts. Note that after each rollout, both the expected cumulative cost and the gradient vector are updated. The rollout number $m$ in one iteration for the control sequence is decided by the convergence of both the expected cumulative cost and the gradient vector. After $m$ rollouts, the control sequence is updated by equation (\ref{ctrl_upd}) in which $\nabla_U \bar{J}|_{\mathcal{X}^{(n)}, U^{(n)}}$ is estimated by $\nabla_U\bar{J}|_{\mathcal{X}^{(n)}, U^{(n)}}^{m}$. Keep doing this until the cost converges and the optimized nominal control sequence is $ \{\bar{u}^*_t\}_{t = 1}^{T}=\{\bar{u}^{(N-1)}_t\}_{t = 1}^{T}$.

Higher order approaches other than gradient descent are possible. However, for a general system, the gradient descent approach is easy to implement. Also it is memory efficient and highly amenable to parallelization as a result of our sequential approach.

\subsection{Linear Time-Varying System Identification}
\label{sec_ls}
Closed loop control design  specified in \eqref{CL} or the approximate closed loop control design specified in \eqref{eq:acl1} requires the knowledge of the parameters $A_{t}, B_{t}, 1 \leq t \leq T,$ of the perturbed linear system. We propose a linear time variant (LTV) system identification procedure to estimate these parameters.  

First start from perturbed linear system given by equation \eqref{eq:acl1}. Using only first order information and estimate the system parameters $A_{t}, B_{t}$ with the following form
\begin{align}
\label{est_sys}
& \delta x_{t+ 1} = \hat{A_t} \delta x_t+\hat{B_t} \delta u_t,
\end{align}
rewrite with augmented matrix
\begin{align}
\label{aug system}
& \delta x_{t+ 1} = [\hat{A_t} ~|~ \hat{B_t}]  ~
\begin{bmatrix}
   \delta x_t \\
   \delta u_t \\
\end{bmatrix},
\end{align}
Now write out their components for each iteration in vector form as,
\begin{align}
\label{lsq1}
& Y = [\delta x_{t+1}^0 \delta x_{t+1}^1 \cdots \delta x_{t+1}^{N-1}],\nonumber\\
& X=\begin{bmatrix}
   \delta x_t^0 & \delta x_t^1& \cdots & \delta x_t^{N-1} \\
   \delta u_t^0 &  \delta u_t^1 & \cdots & \delta u_t^{N-1} \\
\end{bmatrix},\nonumber\\
&  Y=[\hat{A_t} ~|~ \hat{B_t}]X,
\end{align}
where N is the total iteration number. $\delta x_{t+1}^n$ denotes the output state deviation , $\delta x_{t}^n$ denotes the input state perturbations and $\delta u_{t}^n$ denotes the input control perturbations at time $t$ of the $n^{th}$ iteration. All the perturbations are zero-mean, i.i.d, Gaussian random vectors whose covariance matrix is $\sigma I$ where  I is the identity matrix and $\sigma $ is a scalar. Note that here one iteration only has one rollout.

The next step is to apply the perturbed control $\{\bar{u}^*_t + \delta u_t^n\}_{t = 1}^{T}$ to the system and collect input-output experiment data in a blackbox simulation manner.

Using the  least square method $\hat{A_t}$ and $\hat{B_t}$ can be calculated in the following procedure
\begin{align}
\label{lsq2}
&  YX'=[\hat{A_t} ~|~ \hat{B_t}]XX',
\end{align}
As the perturbations are zero-mean, i.i.d, Gaussian random noise, $XX' = \sigma I$. Then
\begin{align}
\label{lsq3}
 &[\hat{A_t} ~|~ \hat{B_t}] = \frac{1}{\sigma}YX' \nonumber \\
 &=\frac{1}{\sigma}[\delta x_{t+1}^0 \delta x_{t+1}^1 \cdots \delta x_{t+1}^{N-1}]
\begin{bmatrix}
   (\delta x_t^0)' &  (\delta u_t^0)'\\
   (\delta x_t^1)' &  (\delta u_t^1)'\\
   \vdots & \vdots \\
   (\delta x_t^{N-1})' &  (\delta u_t^{N-1})'\\
\end{bmatrix}
\end{align}

The calculation procedure can also be done in a sequential way similar to the update of the gradient vector in the open-loop optimization algorithm. Therefore it is highly amenable to parallelization and memory efficient.

\subsection{Closed Loop Control Design} 
Given the parameter estimate of the perturbed linear system, we solve the  closed loop control problem given in \eqref{eq:acl1}. This is a standard LQR problem. By solving the Riccati equation, we can get the closed-loop optimal feedback gain $K^*_t$. The details of the design is standard and is omitted here. 

\begin{algorithm}
    \caption{D2C Algorithm}
    \label{algo1}
   {\bf 1)}  Solve the deterministic open-loop optimization problem for optimal open loop nominal control sequence and trajectory $(\{\bar{u}^*_t\}_{t = 1}^{T}, \{\bar{x}_t^*\}_{t = 1}^T)$ using gradient descent method (Section \ref{sec_open}).\\
  {\bf 2)} Identify the LTV system $(\hat{A}_t, \hat{B}_t)$ via least square estimation (Section \ref{sec_ls}).\\
  {\bf 3)} Solve the Riccati equations  using estimated LTV system equation for feedback gain $\{ K^*_t \}_{t = 1}^{T}$.\\
  {\bf 4)} Set $t = 1$, given initial state $x_1 = \bar{x}_1^*$ and state deviation $\delta x_1 = 0$.\\
\While{$t \leq T$}{
  \begin{align}
& u_t = \bar{u}_t^* + K^*_t \delta x_t, \nonumber \\
& x_{t + 1} = f(x_{t}) + B_{t} (u_{t} + \epsilon w_{t}),\nonumber \\
& \delta x_{t+1} = x_{t + 1}-\bar{x}^*_{t + 1}
\end{align}
 $t = t + 1$.
}
\end{algorithm}

\subsection{D2C Algorithm: Summary}
The Decoupled Data Based Control  (D2C) Algorithm is summarized in Algorithm \ref{algo1}.

\section{Simulation Results}\label{sec5}
In this section, we compare the D2C approach with the well-known deep reinforcement learning algorithm - Deep Deterministic Policy Gradient (DDPG)  \cite{DDPG}. For the comparison, we evaluate both the methods in the following three aspects:  
\begin{itemize}
\item Data efficiency in training - the amount of data sampling and storage required to achieve a desired task.
\item Robustness to noise - the deviation from the predefined task due to random noise in process in the testing stage.
\item Ease of training - the challenges involved in training with either of the data-based approaches.
\end{itemize}

\subsection{Tasks and Implementation}
We tested our method with four  benchmark tasks, all implemented in MuJoCo simulator \cite{mujoco}: Inverted pendulum, Cart-pole, 3-link swimmer and 6-link swimmer \cite{dpmd_suite}. Each of the systems and their tasks are briefly defined as follows: 

\paragraph{Inverted pendulum} A swing-up task of this 2D system from its downright initial position is considered. 

\paragraph{Cart-pole} 
The state of a 4D under-actuated cart-pole comprises of the angle of the pole, cart's horizontal position and their rates. Within a given horizon, the task is to swing-up the pole and balance it at the middle of the rail by applying a horizontal force on the cart.
\paragraph{3-link Swimmer} 
The 3-link swimmer model has 5 degrees of freedom and together with their rates, the system is described by 10 state variables. The task is to solve the planning and control problem from a given initial state to the goal position located at the center of the ball. Controls can only be applied in the form of torques to two joints. Hence, it is under-actuated by 3 DOF.
\paragraph{6-link Swimmer} 
The task with a 6-link swimmer model is similar to that defined in the 3-link case. However, with 6 links, it has 8 degrees of freedom and hence, 16 state variables, controlled by 5 joint motors.

\begin{figure*}[!htb]

\begin{multicols}{4}
    \hspace{0.5cm}    
    \subfloat[Inverted Pendulum - D2C]{\includegraphics[width=1\linewidth]{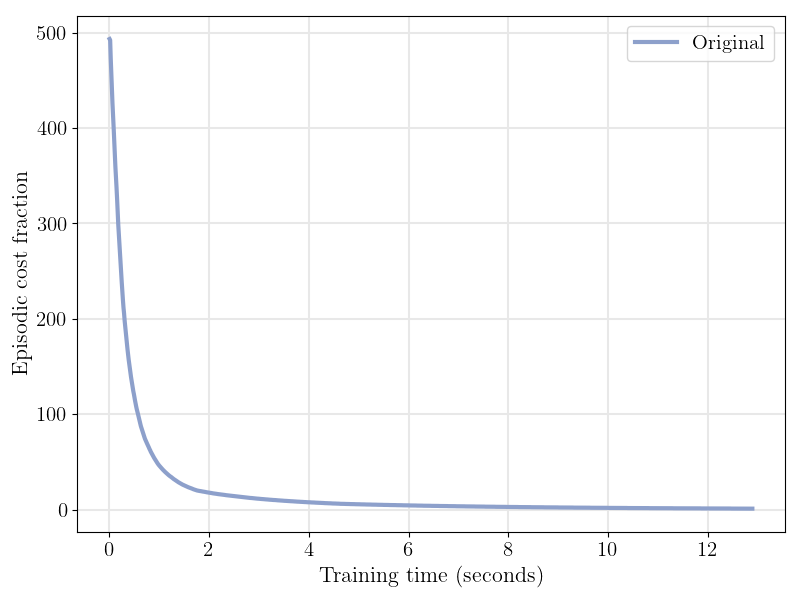}}
     \subfloat[Cart-Pole - D2C]{\includegraphics[width=\linewidth]{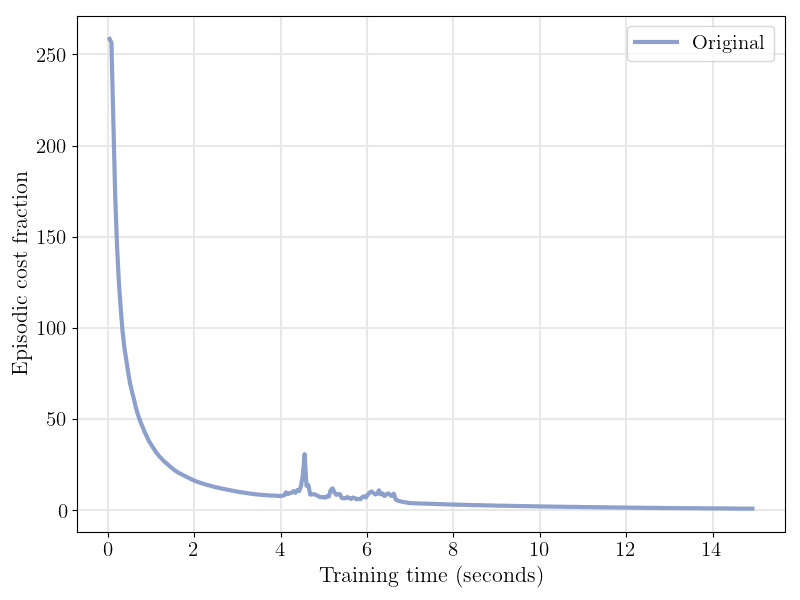}}
     \subfloat[3-link Swimmer - D2C]{\includegraphics[width=\linewidth]{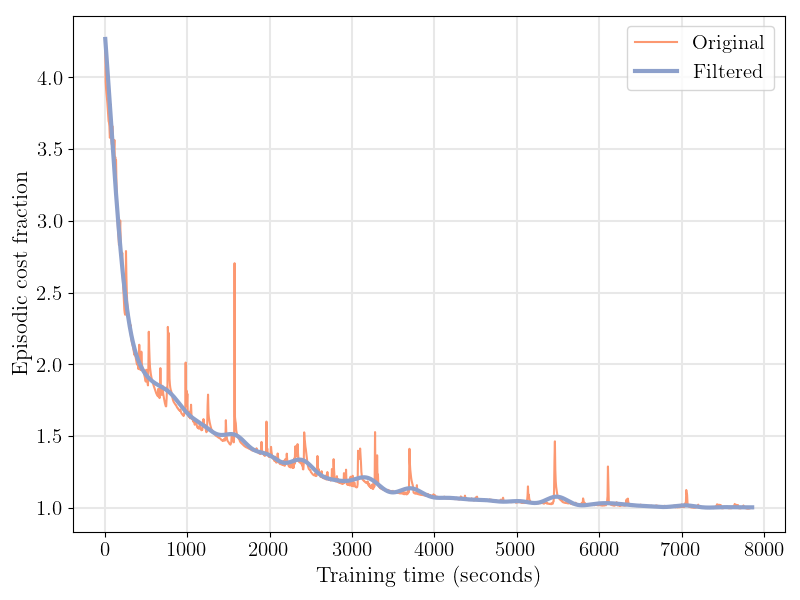}}
     \subfloat[6-link Swimmer - D2C]{\includegraphics[width=\linewidth]{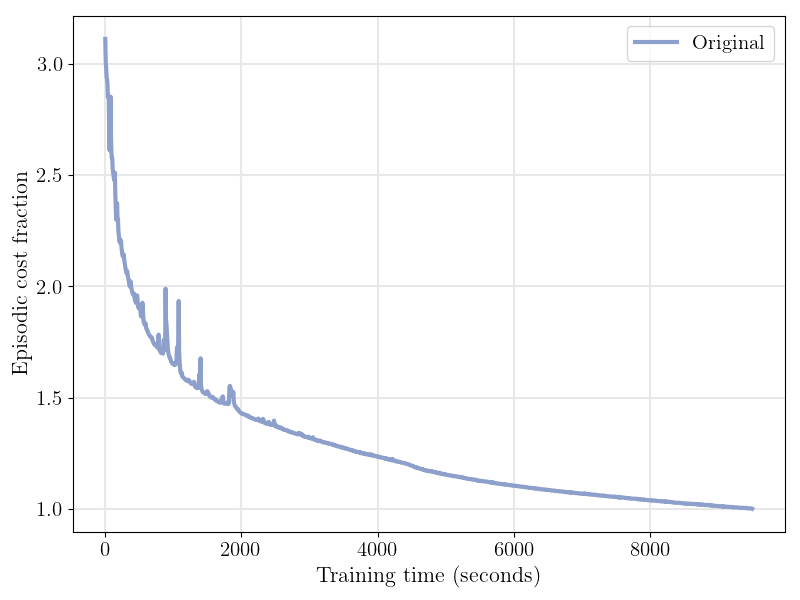}}
     
    \end{multicols}
\begin{multicols}{4} 
    \hspace{0.5cm}    
      \subfloat[Inverted Pendulum - DDPG]{\includegraphics[width=\linewidth]{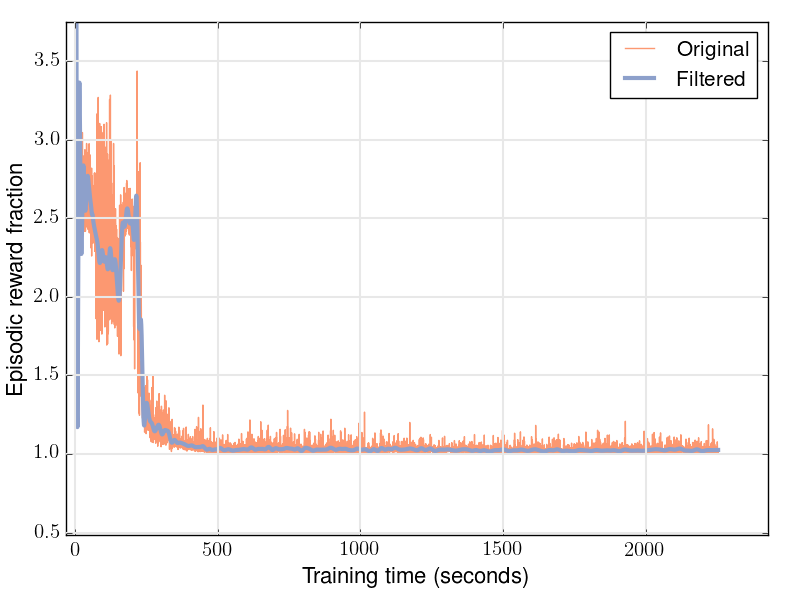}}
      \subfloat[Cart-Pole - DDPG]{\includegraphics[width=\linewidth]{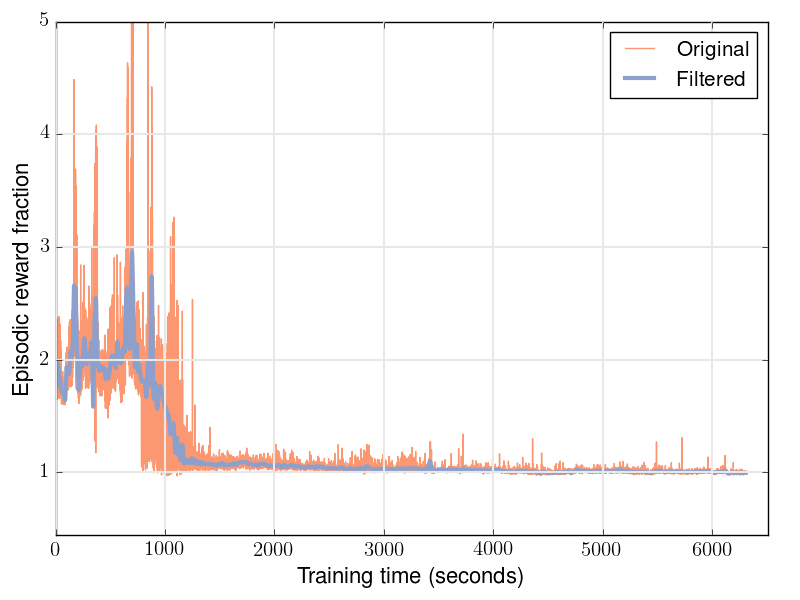}}
      \subfloat[3-link Swimmer - DDPG]{\includegraphics[width=\linewidth]{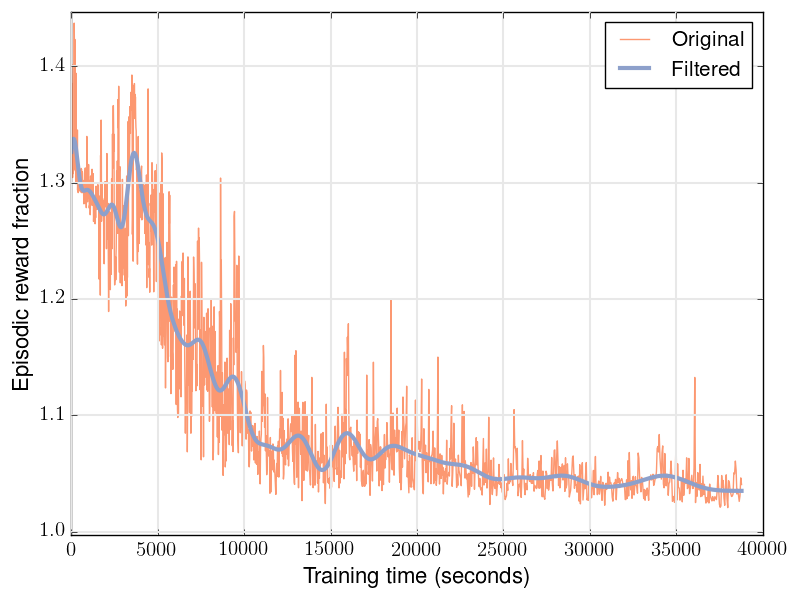}}
      \subfloat[6-link Swimmer - DDPG]{\includegraphics[width=\linewidth]{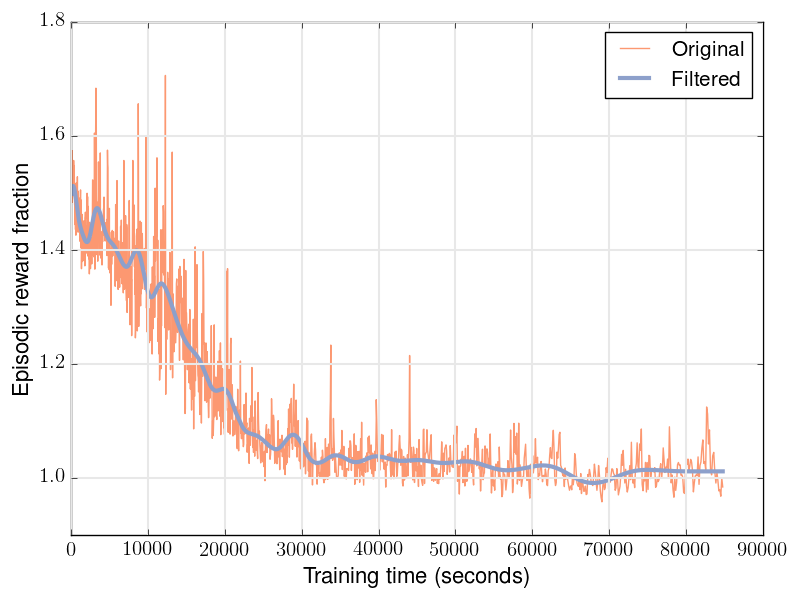}}
\end{multicols}
\caption{Episodic reward fraction vs time taken during training}
\end{figure*}

\begin{figure*}[!htb]
\begin{multicols}{4}
    \hspace{0.5cm}    
      \subfloat[Inverted Pendulum]{\includegraphics[width=\linewidth]{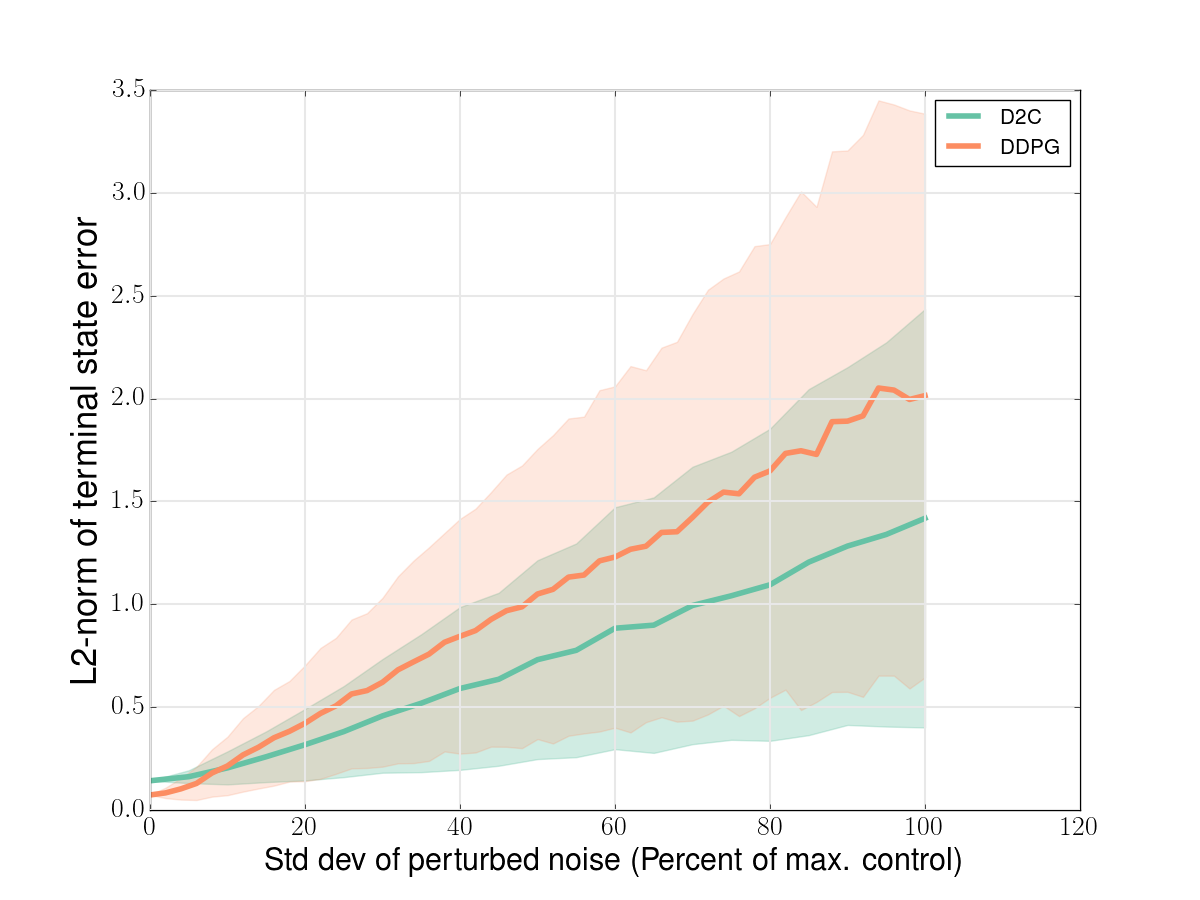}}    
      \subfloat[Cart-Pole ]{\includegraphics[width=\linewidth]{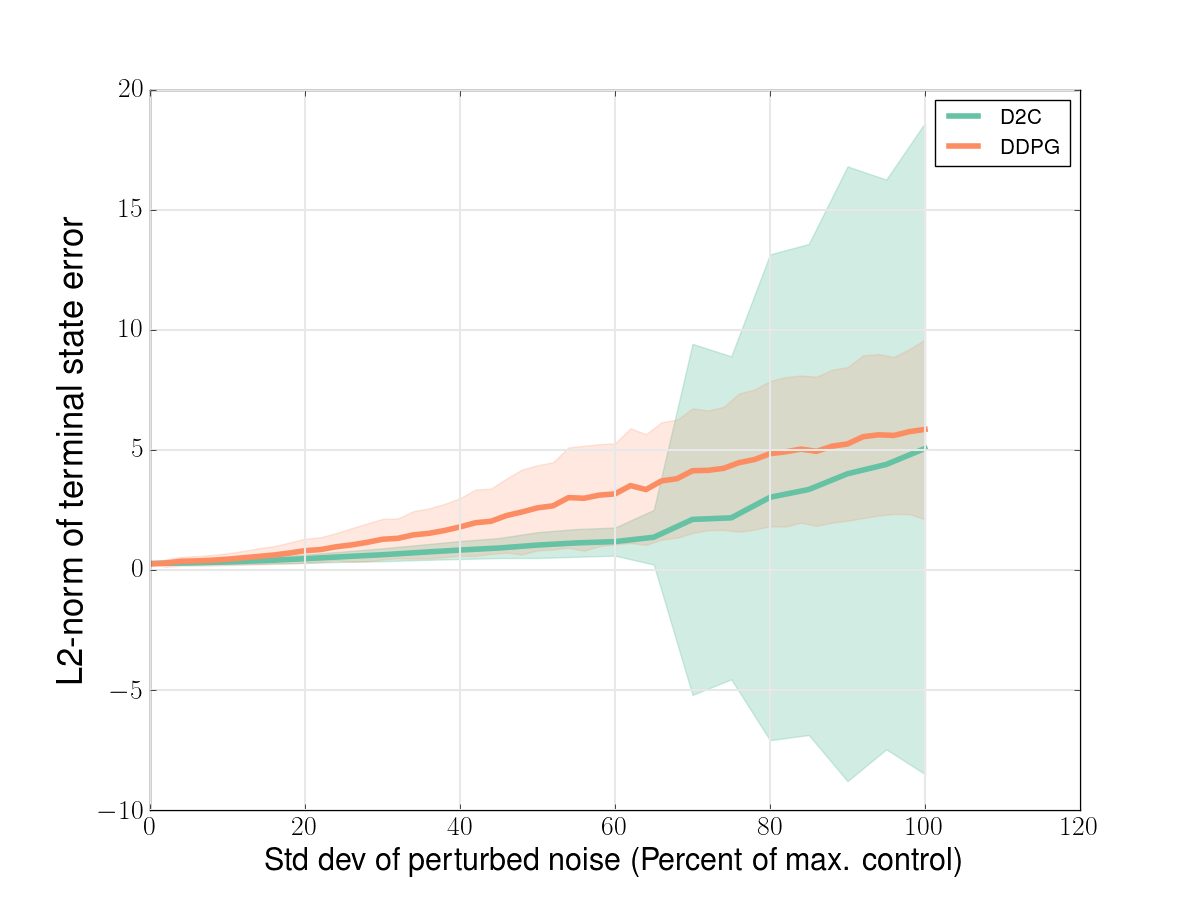}}
      \subfloat[3-link Swimmer]{\includegraphics[width=\linewidth]{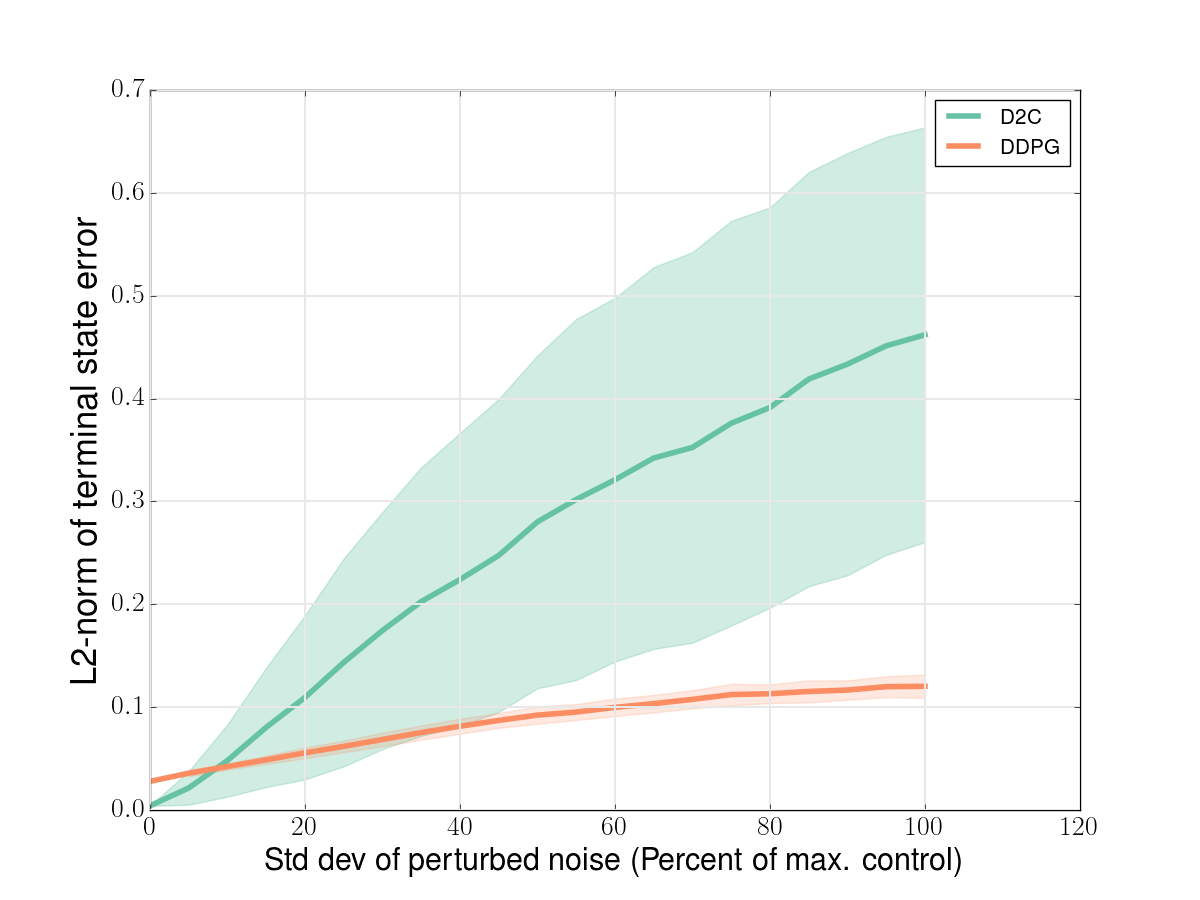}}
      \subfloat[6-link Swimmer ]{\includegraphics[width=\linewidth]{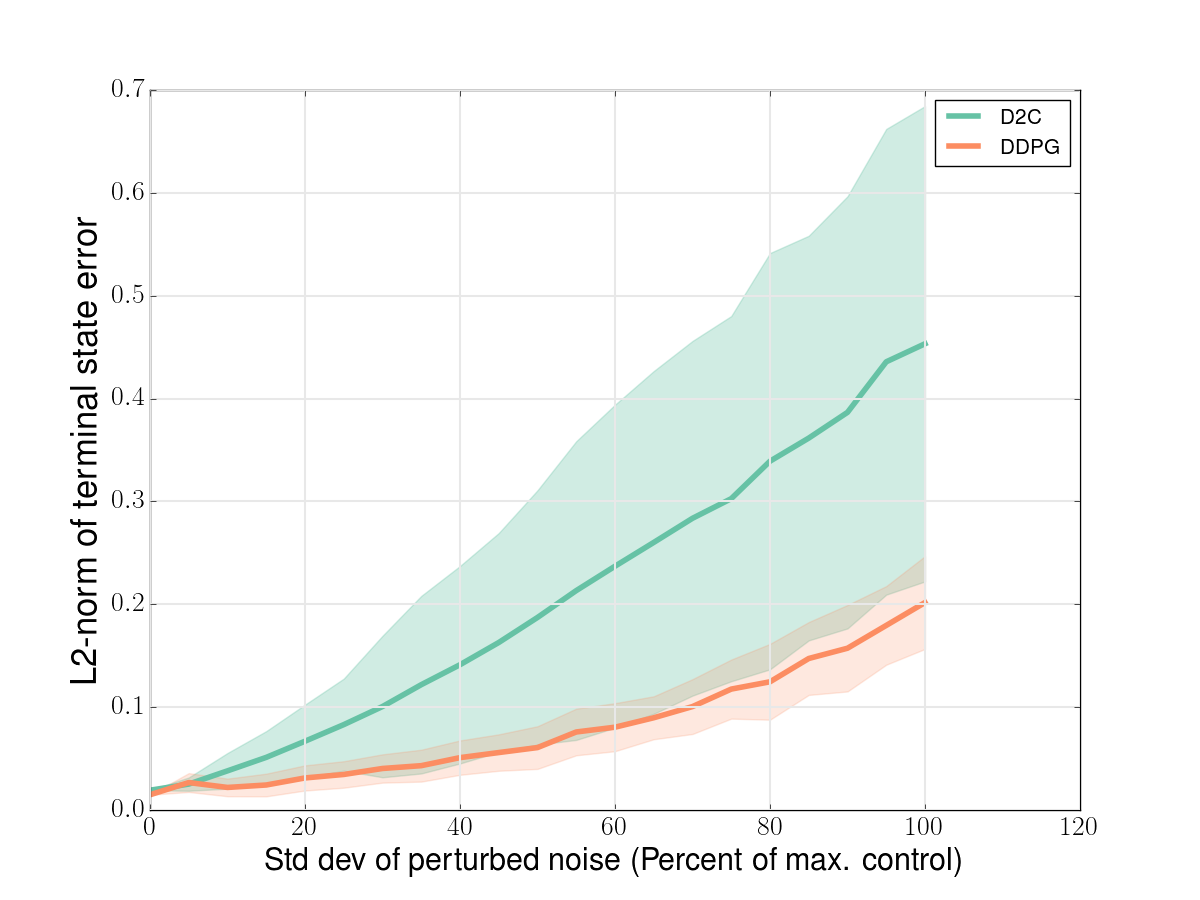}}
\end{multicols}
\caption{Terminal MSE vs noise level during testing}
\end{figure*}

D2C implementation is done in three stages corresponding to those mentioned in the previous section and `MuJoCo Pro 150' is used as the environment for simulating the blackbox model. An off-the-shelf implementation of DDPG provided by {\it Keras-RL} \cite{plappert2016kerasrl} library has been customized for our simulations. It may be noted that the structure of the reward function is formulated to optimize the performance of DDPG and hence, different from that used in D2C. However, the episode length (or horizon) and the time-discretization step is held identical. Although continuous RL algorithms such as DDPG learn the policy and thereby interpolating it to the entire state space, we consider the task-based experiments in a finite limited time-frame window approach. Training is performed until the neural networks' loss is converged. Hence, though the episodic reward does seem to converge earlier, that itself may not indicate that the policy is converged. 

For fair comparison, `Episodic reward/cost fraction' is considered with both methods. It is defined as the fraction of reward obtained in an episode during training w.r.t the nominal episodic reward (converged reward). Note that the words 'reward' and 'cost' are being used interchangeably due to their different notions in optimal control and RL literature respectively, though they achieve the same objective. For simplicity, one is considered the negative of the other.

\subsection{Performance Comparison}

{\bf Data-efficiency:} As mentioned above, an efficient training is one that requires minimal data sampling in order to achieve the same behavior. One way of measuring it is to collate the times taken for the episodic cost (or reward) to converge during training. Plots in Fig. 1 show the training process with both methods on the systems considered. Table \ref{timecompare} delineates the times taken for training respectively. As the system identification and feedback gain calculation in case of D2C take only a small portion of time, the total time comparison in (Table \ref{timecompare}) shows that D2C learns the optimal policy substantially faster  than DDPG and hence, has a better data efficiency. 

{\bf Robustness to noise:} As with canonical off-policy RL algorithms, DDPG requires that an exploration noise be added to the policy, during training. Given that the training adapts the policy to various levels of noise, combined with hours of intense training and a nonlinear policy output, it is expected that it is more robust towards noise as is evident from Figs. 2 (c) and 2 (d). However, from plots in Figs. 2 (a) and (b), it is evident that in some systems the performance of D2C is on par with or better than DDPG. 
It may also be noted that the error variance in D2C method increases abruptly when the noise level is higher than a threshold and drives the system too far away from the nominal trajectory that the LQR controller cannot fix it. This could be considered as a drawback for D2C. However, it must be noted that the range of noise levels (up until 100 \% of the maximum control signal) that we are considering here is far beyond what is typically encountered in practical scenarios. Hence, even in swimmer examples, the performance of D2C is tolerable to a reasonable extent of noise in the system.

{\bf Ease of training:} The ease of training is often an ignored topic in analyzing a reinforcement learning (RL) algorithm. By this, we mean the challenges associated with its implementation. As with many RL algorithms that involve neural networks, DDPG has no guarantees for policy convergence. As a result, the implementation often involves tuning a number of hyper-parameters and a careful reward shaping in a trial and error fashion, which is even more time-consuming given that their successful implementation already involves significant time and computational resources. Reproducibility \cite{henderson2018deep} is still a major challenge that RL is yet to overcome. On the other hand, training in our approach is predictable and very reliable.

To elucidate the ease of training from an empirical perspective, the exploration noise that is required for training in DDPG mandates the system to operate with a shorter time-step than a threshold, beyond which the simulation fails due to an unbearable magnitude of control actions into the system. For this, we train both the swimmers in one such case (with $\Delta t = 0.01$ sec) till it fails and execute the intermediate policy. Fig. \ref{testing_mse_001_swimmers} shows the plot in the testing-stage with both methods. It is evident from the terminal state mean-squared error at zero noise level that the nominal trajectory of DDPG is incomplete and its policy failed to reach the goal. The effect is more pronounced in the higher-dimensional 6-link swimmer system (Fig. \ref{testing_mse_001_swimmer6}), where the DDPG's policy can be deemed to be downright broken. Note, from Table \ref{timecompare}, that the systems have been trained with DDPG for a time that is more than thrice with the 3-link swimmer and 4 times with the 6-link swimmer. On the other hand, under same conditions, the seamless training of D2C results in a working policy with even greater data-efficiency.


\begin{table}[h]
\caption{Simulation parameters and training outcomes}
\label{timecompare}
\begin{center}
\begin{tabular}{|c|c|c|c|c|c|}
\hline
System&Steps per& Time-& \multicolumn{3}{|c|}{Training time (in sec.)}\\\cline{4-6}
&episode&step&\multicolumn{2}{|c|}{D2C}& \\\cline{4-5}
&&(in sec.)& Open-& Closed-&DDPG\\
&&&loop&loop&\\
\hline
Inverted& 30 & 0.1 &12.9  &$<0.1$& 2261.15\\
Pendulum& & & & &\\
\hline
Cart pole& 30 & 0.1 & 15.0 & 1.33& 6306.7\\
\hline
3-link & 1600& 0.005 & 7861.0&13.1 & 38833.64\\
Swimmer&  800& 0.01 & 4001.0&4.6 & 13280.7*\\
\hline
6-link &1500 & 0.006 &9489.3&26.5 & 88160\\
Swimmer &900 & 0.01 &3585.4&16.4 & 15797.2*\\ 
\hline
\end{tabular}
\end{center}
\end{table}

\begin{figure}
     \centering
     \subfloat[3-link swimmer]
     {\includegraphics[width=0.23\textwidth]{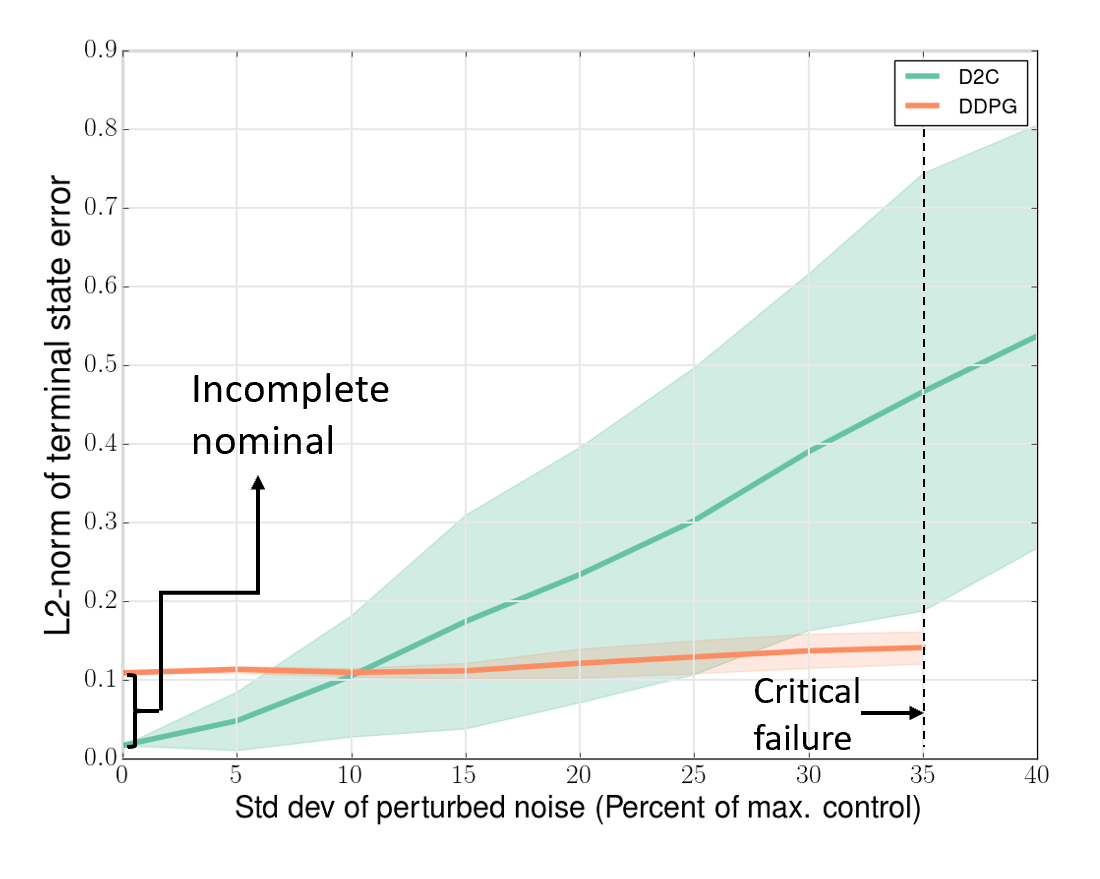}
     \label{testing_mse_001_swimmer}}
     \subfloat[6-link swimmer]
     {\includegraphics[width=0.23\textwidth]{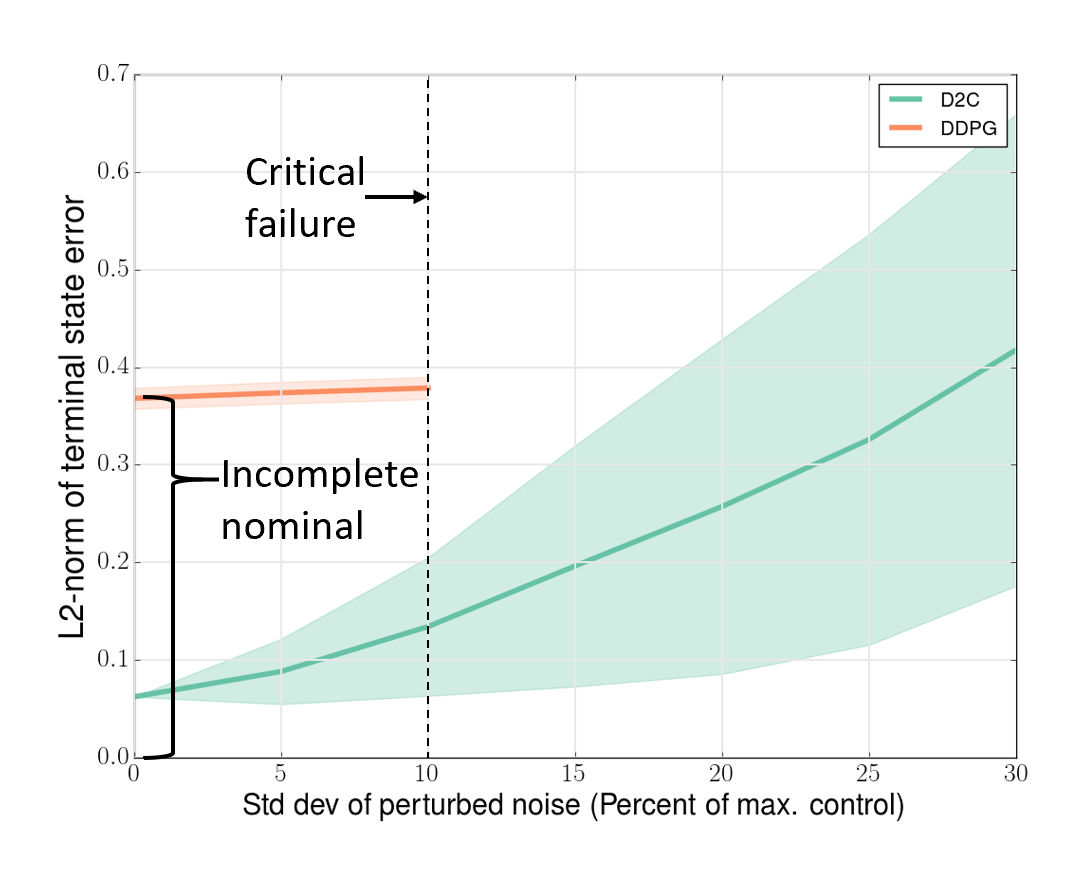}
     \label{testing_mse_001_swimmer6}}
     \caption{D2C vs DDPG at $\Delta t = 0.01 s$}
     \label{testing_mse_001_swimmers}
\end{figure}

\section{CONCLUSIONS}
In this paper, we proposed a near-optimal control algorithm under fully observed conditions and showed that our method is able to scale-up to higher dimensional state-space without any knowledge about the system model. Due to sequential calculation used in the open-loop optimization and the system identification, D2C is highly memory efficient and also convenient for parallelization. We tested its performance and juxtaposed them with a state-of-the-art deep RL technique - DDPG. From the results, our method has conspicuous advantages over DDPG in terms of data efficiency and ease of training. The robustness of D2C is also better in some cases, but has scope for further development by employing more sophisticated feedback methods and ensuring that the data efficiency is not compromised. We also believe further drastic reduction in the planning time can be achieved by parallelization and a more sophisticated parametrization of the open loop problem.

It is evident from the simulations that methods such as D2C are able to achieve their goals accurately whereas DDPG consumes inordinate amount of time in `fine-tuning' their behavior towards the goal. However, we also note that, by doing this, DDPG is tentatively exploring over the entire state-space and can result in a better generic policy. Another drawback with D2C over canonical RL algorithms is that the cost could be stuck in a local minimum, whereas DDPG due to the nature of stochastic gradient descent in training neural networks, can potentially reach the globally optimal solution. Nevertheless, we hope that our approach signifies the potential of decoupling based approaches such as D2C in a reinforcement learning paradigm and recognizes the need for more hybrid approaches that complement the merits of each.

\bibliographystyle{IEEEtran}
\bibliography{CDC_refs}
\newpage

\section*{APPENDIX}

\subsection{Proof of Lemma \ref{L1}}
\label{sec:proofL1}

\begin{proof}
We proceed by induction. The first general instance of the recursion occurs at $t=3$.
It can be shown that: 
 \begin{align}
 &\delta x_3 = \underbrace{(\bar{A}_2\bar{A}_1(\epsilon w_0) + \bar{A}_2 (\epsilon w_1) + \epsilon w_2)}_{\delta x_3^l} + \nonumber\\
 &\underbrace{\{\bar{A}_2 \bar{S}_1(\epsilon w_0) + \bar{S}_2(\bar{A}_1(\epsilon w_0) + \bar{S}_2(\bar{A}_1(\epsilon w_0) + \epsilon w_1 + \bar{S}_1(\epsilon w_0))\}}_{\bar{\bar{S}}_3}. 
 \end{align}
 Noting that $\bar{S}_1(.)$ and $\bar{S}_2(.)$ are second and higher order terms, it follows that $\bar{\bar{S}}_3$ is $O(\epsilon^2)$. \\
 Suppose now that $\delta x_t = \delta x_t^l + \bar{\bar{S}}_t$ where $\bar{\bar{S}}_t$ is $O(\epsilon^2)$. Then:
 \begin{align}
 \delta x_{t+1} = \bar{A}_{t+1}(\delta x_t^l + \bar{\bar{S_t}}) + \epsilon w_t + \bar{S}_{t+1}(\delta x_t), \nonumber\\
 = \underbrace{(\bar{A}_{t+1} \delta x_t^l + \epsilon w_t)}_{\delta x_{t+1}^l} +\underbrace{\{\bar{A}_{t+1}\bar{\bar{S}}_t + \bar{S}_{t+1}(\delta x_t)\}}_{\bar{\bar{S}}_{t+1}}.
 \end{align}
 Noting that $\bar{S}_{t+1}$ is $O(\epsilon^2)$ and that $\bar{\bar{S}}_{t+1}$ is $O(\epsilon^2)$ by assumption, the result follows. 
\end{proof}

\subsection{Lemma 2}
\label{sec:proofL2}

\begin{lemma} 
\label{L2}
Let $\delta J_1^{\pi}$, $\delta J_2^{\pi}$ be as defined in \eqref{eq.9b}. Then, $\mathbb{E} [\delta J_1 \delta J_2]$ is an $O(\epsilon^4)$ function.
\end{lemma}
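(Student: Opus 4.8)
The plan is to expand the product $\delta J_1^{\pi} \delta J_2^{\pi}$ using the definitions in \eqref{eq.9b} and exploit the order-of-magnitude bookkeeping in the noise parameter $\epsilon$, together with the zero-mean and independence properties of the white noise $w_t$. Recall that $\delta J_1^{\pi} = \sum_t \bar C_t \delta x_t^l$ is an $O(\epsilon)$ quantity that is \emph{linear} in the noise history, while $\delta J_2^{\pi} = \sum_t \bar H_t(\delta x_t) + \bar C_t \bar{\bar S}_t$ is an $O(\epsilon^2)$ quantity. Naively multiplying these gives $O(\epsilon^3)$, so the crux of the argument is to show that the would-be $O(\epsilon^3)$ contribution actually vanishes in expectation, leaving the genuine leading term at $O(\epsilon^4)$.

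First I would write $\delta x_t^l = \epsilon \sum_{s<t} \Phi_{t,s} w_{s-1}$ for the appropriate state-transition matrices $\Phi_{t,s}$ built from the $\bar A_t$, so that $\delta J_1^{\pi} = \epsilon \sum_s \beta_s' w_{s-1}$ for suitable deterministic vectors $\beta_s$; this makes explicit that $\delta J_1^{\pi}$ is a linear combination of the $w$'s with no constant term. Next I would expand $\delta J_2^{\pi}$ to its leading order: by Lemma \ref{L1}, $\bar{\bar S}_t$ is $O(\epsilon^2)$, and since $\bar S_t(\cdot)$ and $\tilde S_t(\cdot)$ are second-and-higher-order terms in their arguments, the leading ($\epsilon^2$) part of $\bar{\bar S}_t$ is a \emph{quadratic} form in the noise history, i.e.\ a sum of terms $\epsilon^2 w_i' M_{ij} w_j$ for deterministic matrices $M_{ij}$; similarly the leading part of $\bar H_t(\delta x_t)$ is quadratic in the $w$'s (it contains $\delta x_t^l{}' K_t' R K_t \delta x_t^l$ to leading order). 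Hence $\delta J_2^{\pi} = \epsilon^2 Q(w) + O(\epsilon^3)$ where $Q(w)$ is a quadratic (degree-two) polynomial in the jointly Gaussian, zero-mean, independent noise vectors.

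Then the key computation is $\mathbb{E}[\delta J_1^{\pi} \delta J_2^{\pi}] = \epsilon^3 \, \mathbb{E}[(\text{linear in } w)(\text{quadratic in } w)] + O(\epsilon^4)$. The leading term is a sum of third moments of the Gaussian vector $(w_0, w_1, \dots)$; since this vector is zero-mean Gaussian, all odd moments vanish, so $\mathbb{E}[(\text{linear})(\text{quadratic})] = 0$ and the $O(\epsilon^3)$ contribution disappears. What remains is $\mathbb{E}[\delta J_1^{\pi} \delta J_2^{\pi}] = O(\epsilon^4)$, which is the claim. To make this rigorous I would organize the remainders carefully: write $\delta J_2^{\pi} = \epsilon^2 Q(w) + R(w,\epsilon)$ with $R = O(\epsilon^3)$ uniformly, bound $\mathbb{E}[|\delta J_1^{\pi} R|] \le (\mathbb{E}[(\delta J_1^{\pi})^2])^{1/2} (\mathbb{E}[R^2])^{1/2} = O(\epsilon)\cdot O(\epsilon^3) = O(\epsilon^4)$ by Cauchy--Schwarz, and handle the cross term $\epsilon^3 \mathbb{E}[\delta J_1^{\pi}/\epsilon \cdot Q(w)]$ by the vanishing-odd-moment argument.

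The main obstacle I anticipate is the careful propagation of the error terms: making precise that the sub-leading pieces of $\bar{\bar S}_t$ and $\bar H_t$ are genuinely $O(\epsilon^3)$ with constants uniform in $t$ over the finite horizon (this uses the smoothness assumptions on $f$ and $\pi_t$ and a bound on the higher-order Taylor remainders, together with the fact that $\delta x_t$ itself has all moments of order $O(\epsilon)$), and that the Gaussian moments involved are finite so Cauchy--Schwarz applies. The algebraic heart of the argument — odd Gaussian moments vanish, killing the $O(\epsilon^3)$ term — is short; the bookkeeping to legitimately isolate that term is where the work lies.
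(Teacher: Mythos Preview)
Your proposal is correct and follows essentially the same strategy as the paper: isolate the leading $O(\epsilon^2)$ part of $\delta J_2^{\pi}$ as a quadratic form in the noise, multiply by the linear-in-noise $\delta J_1^{\pi}$, and kill the resulting $O(\epsilon^3)$ term by observing that third-order moments of the zero-mean noise vanish. Your direct appeal to vanishing odd Gaussian moments and your Cauchy--Schwarz bound on the remainder are, if anything, tidier than the paper's increment-decomposition argument, which ultimately reduces to the same $\mathbb{E}[w_{s_1}w_{s_2}w_{s_3}]=0$ computation.
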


\begin{proof}
In the following, we suppress the explicit dependence on $\pi$ for $\delta J_1^{\pi}$ and $\delta J_2^{\pi}$ for notational convenience.
Recall that $\delta J_1 = \sum_{t=0}^T c_t^x \delta x_t^l$, and $\delta J_2 = \sum_{t=0}^T \bar{H}_t(\delta x_t) + c_t^x \bar{\bar{S}}_t$.  For notational convenience, let us consider the scalar case, the vector case follows readily at the expense of more elaborate notation. Let us first consider $\bar{\bar{S}}_2$. We have that $\bar{\bar{S}}_2 = \bar{A}_2\bar{S}_1(\epsilon w_0) + \bar{S}_2(\bar{A}_1(\epsilon w_0) + \epsilon w_1+ \bar{S}_1(\epsilon w_0))$. Then, it follows that:
\begin{align}
\bar{\bar{S}}_2 = \bar{A}_2 \bar{S}_1^{(2)}(\epsilon w_0)^2 + \bar{S}_2^{(2)}(\bar{A}_1 \epsilon w_0 + \epsilon w_1)^2 + O(\epsilon^3),
\end{align}
where $\bar{S}_t^{(2)}$ represents the coefficient of the second order term in the expansion of $\bar{S}_t$. A similar observation holds for $H_2(\delta x_2)$ in that:
\begin{align}
\bar{H}_2(\delta x_2) = \bar{H}_2^{(2)}(\bar{A}_1(\epsilon w_0) + \epsilon w_1)^2 + O(\epsilon^3),
\end{align}
where $\bar{H}_t^{(2)}$ is the coefficient of the second order term in the expansion of $\bar{H}_t$. Note that $\epsilon w_0 = \delta x_1^l$ and $\bar{A}_1(\epsilon w_0) + \epsilon w_1= \delta x_2^l$. Therefore, it follows that we may write:
\begin{align}
\bar{H}_t(\delta x_t) + C_t^x \bar{\bar{S}}_t = \sum_{\tau = 0}^{t-1} q_{t,\tau}(\delta x_{\tau}^l)^2 + O(\epsilon^3),
\end{align}
for suitably defined coefficients $q_{t,\tau}$. 
Therefore, it follows that 
\begin{align}
\delta J_2 = \sum_{t=1}^T \bar{H}_t(\delta x_t) + C_t^x \bar{\bar{S}}_t\nonumber\\
= \sum_{\tau = 0}^T \bar{q}_{T,\tau}(\delta x_{\tau}^l)^2+ O(\epsilon^3),
\end{align}
for suitably defined $\bar{q}_{T,\tau}$. Therefore:
\begin{align}
\delta J_1 \delta J_2 = \sum_{t,\tau} C_{\tau}^x(\delta x_{\tau}^l)\bar{q}_{T,t}(\delta x_t^l)^2 + O(\epsilon^4).
\end{align}
Taking expectations on both sides:
\begin{align}
E[\delta J_1 \delta J_2] = \sum_{t,\tau} C_{\tau}^x \bar{q}_{T,t} E[\delta x_{\tau}^l (\delta x_t^l)^2] + O(\epsilon^4).
\end{align}
Break $\delta x_t^l = (\delta x_t^l - \delta x_{\tau}^l) + \delta x_{\tau}^l$, assuming $\tau < t$. Then, it follows that:
\begin{align}
E[\delta x_{\tau}^l (\delta x_t^l)^2] = E[\delta x_{\tau}^l (\delta x_t^l - \delta x_{\tau}^l)^2] + E[(\delta x_{\tau}^l)^3]  \nonumber\\
+ 2 E[(\delta x_t^l - \delta x_{\tau}^l)(\delta x_{\tau}^l)^2]\nonumber\\
= E[(\delta x_{\tau}^l)^3],
\end{align}
where the first and last terms in the first equality drop out due to the independence of the increment $(\delta x_t^l - \delta x_{\tau}^l)$ from $\delta x_{\tau}^l$, and the fact that $E[\delta x_t^l - \delta x_{\tau}^l] = 0$ and $E[\delta x_{\tau}^l] = 0$. Since $\delta x_{\tau}^l$ is the state of the linear system $\delta x_{t+1}= \bar{A}_t \delta x_t^l + \epsilon w_t$, it may again be shown that:
\begin{align}
E[\delta x_{\tau}^l]^3 = \sum_{s_1,s_2,s_3} \Phi_{\tau, s_1}\Phi_{\tau,s_2}\Phi_{\tau,s_3} E[w_{s_1}w_{s_2}w_{s_3}],
\end{align}
where $\Phi_{t,\tau}$ represents the state transitions operator between times $\tau$ and $t$, and follows from  the closed loop dynamics. Now, due to the independence of the noise terms $w_t$, it follows that $E[w_{s_1}w_{s_2}w_{s_3}] = 0$ regardless of $s_1,s_2,s_3$.\\
 An analogous argument as above can be repeated for the case when $\tau > t$. Therefore, it follows that $E[\delta J_1 \delta J_2] = O(\epsilon^4)$.
\end{proof}

\end{document}